\definecolor{Gray}{gray}{0.9}
\newtheorem{lemma}{Lemma}
\newtheorem{theorem}{Theorem}
\newtheorem{corollary}{Corollary}
\newtheorem{appendix_theorem}{Theorem}
\DeclareMathOperator*{\argmax}{arg\,max}
\DeclareMathOperator*{\argmin}{arg\,min}
\newcommand{\ba}{\mathbf{a}}
\newcommand{\bb}{\mathbf{b}}
\newcommand{\bg}{\mathbf{g}}
\newcommand{\bl}{\mathbf{l}}
\newcommand{\bp}{\mathbf{p}}
\newcommand{\bu}{\mathbf{u}}
\newcommand{\bv}{\mathbf{v}}
\newcommand{\bw}{\mathbf{w}}
\newcommand{\bx}{\mathbf{x}}
\newcommand{\by}{\mathbf{y}}
\newcommand{\bA}{\mathbf{A}}
\newcommand{\bB}{\mathbf{B}}
\newcommand{\bE}{\mathbf{E}}
\newcommand{\bI}{\mathbf{I}}
\newcommand{\calA}{\mathcal{A}}
\newcommand{\calB}{\mathcal{B}}
\newcommand{\calD}{\mathcal{D}}
\newcommand{\calY}{\mathcal{Y}}
\newcommand{\calN}{\mathcal{N}}
\title{The Trade-Offs of Private Prediction}
\author{
 Laurens van der Maaten$^{*}$ \quad\quad\quad\quad Awni Hannun\thanks{Both authors contributed equally to the paper.}\\
  Facebook AI Research, New York\\
  \texttt{\{lvdmaaten,awni\}@fb.com} \\
}
\begin{document}

\maketitle

\begin{abstract}
Machine learning models leak information about their training data every time they reveal a prediction.
This is problematic when the training data needs to remain private.
Private prediction methods limit how much information about the training data is leaked by each prediction.
Private prediction can also be achieved using models that are trained by private training methods.
In private prediction, both private training and private prediction methods exhibit trade-offs between privacy, privacy failure probability, amount of training data, and inference budget.
Although these trade-offs are theoretically well-understood, they have hardly been studied empirically.
This paper presents the first empirical study into the trade-offs of private prediction.
Our study sheds light on which methods are best suited for which learning setting.
Perhaps surprisingly, we find private training methods outperform private prediction methods in a wide range of private prediction settings.
\end{abstract}

\section{Introduction}
\label{sec:introduction}
Machine learning models are frequently trained on data that needs to remain private, even though the predictions produced by those models are revealed to the outside world.
For example, a hotel recommendation model may be trained on hotel reservation data that its users should not have access to.
Unless proper care is taken, a user of a hotel recommendation service may be able to extract hotel reservation data of other users from the recommendations they receive, which would result in a privacy violation~\cite{carlini2019secret, fredrikson2015model,sablayrolles2019,shokri2017membership,yeom2018}.
Such privacy violations may happen even when users do not have direct access to the model parameters but only to model predictions: several studies have demonstrated that it is possible to reconstruct model parameters from a series of model predictions~\cite{carlini2020,milli2019,tramer2016}.

The goal of \emph{private prediction} is to prevent such privacy violations by limiting the amount of information about the training data that can be obtained from a series of model predictions~\cite{dwork2018}.
Private prediction methods~\cite{bassily2018,dagan2019,dwork2018,nandi2019} perturb the predictions of non-private models to obfuscate any information that can be inferred about the data used to trained those models.
\emph{Private training} methods~\cite{abadi2016,bassily2014,chaudhuri2011,kifer2012,mironov2019,wu2017,wang2017} can be used to obtain private predictions as well.
In particular, private training guarantees that model parameters reveal little information about the training data.
As a result, the predictions of privately trained models do not leak information about the training data either. 

Both private prediction and private training methods exhibit a range of trade-offs when they are used in a private prediction setting.
Specifically, there exist trade-offs between the accuracy of the predictions, the privacy level that can be guaranteed, the probability of a privacy failure, the number of predictions that is revealed publicly (\emph{i.e.}, the inference budget), and the amount of training data.
While these trade-offs are theoretically well understood, little is known about them empirically. 
Prior empirical studies evaluate only private training methods and do not consider the private prediction setting~\cite{iyengar2019, jayaraman2019}.
This paper performs an empirical study into the trade-offs of private prediction.
We aim to provide guidance to practitioners on which private prediction methods are most suitable for a given learning setting.
Perhaps surprisingly, we find that private training methods offer a better privacy-accuracy trade-off than private prediction methods in many practical learning settings.

\section{Problem Statement}
\label{sec:problem_statement}
Consider a private machine learning model $\phi(\bx; \theta)$ with parameters $\theta$ that given a $D$-dimensional input vector, $\bx \in \mathbb{R}^D$, produces a probability vector over $C$ classes $\by \in \Delta^C$.
Herein, $\Delta^C$ represents the $(C\!-\!1)$-dimensional probability simplex.
The parameters $\theta$ were obtained by fitting the model on a training set of $N$ labeled examples, $\calD = \{(\bx_1, \by_1), \dots, (\bx_N, \by_N) \}$, that needs to remain private.

We consider the common scenario in which machine learning model $\phi(\cdot)$ is provided as a service to other parties by the model owner.
Specifically, someone provides a vector $\hat{\bx}$ to the owner of the model, who uses it to compute $\hat{\by} = \phi(\hat{\bx}; \theta)$ and publicly reveals the prediction $\hat{\by}$.
From the perspective of the model owner, there is an inherent risk here that whoever observes $(\hat{\bx}, \hat{\by})$ obtains information about the private training set $\calD$ through the model query: prediction $\hat{\by}$ carries information about parameters $\theta$ that, in turn, carries information about the private training set $\calD$.

The model owner is interested in limiting the amount of information that others can learn about the private training set $\calD$ via an \emph{inference budget} of $B$ queries, $\mathcal{Q} =\{\hat{\bx}_1, \dots, \hat{\bx}_B\}$, to model $\phi(\cdot)$.
Specifically, the model owner aims to provide an $(\epsilon,\delta)$-differential privacy guarantee~\cite{dwork2006} on the information that is leaked about $\calD$ by releasing $B$ predictions:
\begin{equation}
  Pr\left[ \phi\left(\hat{\bx}_b; \theta(\calD)\right)  \subseteq \calY_b : b = 1, \ldots, B \right] \leq e^\epsilon Pr\left[ \phi\left(\hat{\bx}_b; \theta(\calD')\right)  \subseteq \calY_b : b = 1, \ldots, B \right] + \delta,
\label{eq:dp}
\end{equation}
for $\epsilon \geq 0$ and $\delta \geq 0$, $\forall \calY_b \subseteq \Delta^{C}$, $\forall \mathcal{Q}$ with $|\mathcal{Q}| = B$, and for all datasets $\calD$ and $\calD'$ that differ in only one training example.
Herein, we adopt the short-hand notation $\theta(\calD)$ to indicate the parameters $\theta$ that were obtained by training the model on training set $\calD$. The model owner can limit the amount of information that the predictions leak about training set $\calD$ in two primary ways:
\begin{enumerate}[leftmargin=*]
\setlength\itemsep{0em}
\item The model owner can perform \textbf{private training}~\cite{bassily2018,dagan2019,dwork2018,nandi2019} of model $\phi(\cdot)$.
Differentially private training guarantees that the model parameters $\theta$ reveal little information about training set $\calD$.
Because differential privacy is closed under post-processing~\cite{dwork2011}, the model owner can reveal $B = \infty$ predictions (or the model parameters) and still maintain the property in Equation~\ref{eq:dp}.

\item The model owner can perform \textbf{private prediction}~\cite{abadi2016,bassily2014,chaudhuri2011,kifer2012,mironov2019,wu2017,wang2017} using a model $\phi'(\cdot)$ that is not itself differentially private.
Private prediction methods construct $\phi(\cdot)$ from $\phi'(\cdot)$ in a way that limits the amount of information that $\phi(\cdot)$ reveals about $\calD$ through $B$ predictions.
\end{enumerate}

\section{Methods}
\label{sec:methods}

This study performs an empirical analysis of both private training and private prediction methods. We adopt a regularized empirical risk minimization framework in which we minimize:
\begin{equation}
    J(\theta; \calD) = \frac{1}{N} \sum_{n=1}^N \ell(\phi'(\bx_n; \theta), \by_n)  + \lambda R(\theta)
\end{equation}
with respect to $\theta$, where $\ell(\cdot)$ is a loss function, $R(\cdot)$ is a regularizer, and $\lambda \geq 0$ is a regularization parameter.
Some of the private prediction methods that we study make one or more of the following assumptions to obtain the privacy guarantee in Equation~\ref{eq:dp}:

\begin{enumerate}[leftmargin=*]
\setlength\itemsep{0em}
    \item The loss function $\ell(\cdot)$ is strictly convex, continuous, and
        differentiable everywhere.\label{as:loss}
    \item The regularizer $R(\theta)$ is $1$-strongly convex, continuous, and
        differentiable everywhere w.r.t. $\theta$. \label{as:regularizer}
    \item The non-randomized model $\phi'(\cdot)$ is linear in $\bx$, that is, $\phi'(\bx; \theta) = \theta^\top \bx$. \label{as:linear}
    \item The loss function $\ell(\cdot)$ is Lipschitz with a constant $K$, that is, $\|
        \nabla l\|_2 \le K$. \label{as:lip}
    \item The inputs are contained in the unit $L_2$ ball, that is, $\|\bx\|_2 \le 1$
          for all $\bx$. \label{as:l2inp}
\end{enumerate}

Detailed algorithms of all methods and proofs of all theorems are presented in the appendix.

\subsection{Private Training}
\label{sec:private_training}

We consider three private training methods in this study: (1) the model sensitivity method, (2) loss perturbation, and (3) differentially private stochastic gradient descent (SGD).

\noindent\textbf{\underline{Model sensitivity.}}
A simple private training method is for the model owner to add noise to the model parameters in order to hide information about the training data that is captured in those parameters.
The model sensitivity method~\cite{chaudhuri2011} constructs differentially private parameters $\theta$ by adding noise to the minimizer of $J(\cdot)$.
Specifically, we extend~\cite{chaudhuri2011} to multi-class linear models with $\theta \in \mathbb{R}^{D\times C}$ and sample the noise matrix $\bB$ from $p(\bB) \propto e^{-\beta \|\bB\|_F}$.
To satisfy the property in Equation~\ref{eq:dp}, we choose $\beta = \frac{N\lambda\epsilon}{2K}$. In practice, we adopt the multi-class logistic loss which has a Lipschitz constant $K=\sqrt{2}$.

\begin{theorem}
  \label{thm:model_perturb}
  Given assumptions \ref{as:loss}, \ref{as:regularizer}, \ref{as:linear},
  \ref{as:lip}, and \ref{as:l2inp}, the model sensitivity method is
  $(\epsilon, 0)$-differentially private.
\end{theorem}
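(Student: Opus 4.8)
The plan is to follow the classic output-perturbation argument (as in Chaudhuri et al.), adapted to matrix-valued parameters $\theta\in\mathbb{R}^{D\times C}$. Fix neighboring training sets $\calD,\calD'$ that differ in one example, and write $\theta_1 = \argmin_\theta J(\theta;\calD)$ and $\theta_2 = \argmin_\theta J(\theta;\calD')$. The mechanism releases $\theta_1+\bB$ on input $\calD$ and $\theta_2+\bB$ on input $\calD'$, with $\bB$ drawn from the density $p(\bB)\propto e^{-\beta\|\bB\|_F}$. The proof has two parts: (i) bound the Frobenius sensitivity $\|\theta_1-\theta_2\|_F$, and (ii) turn that into a pointwise density-ratio bound, from which $(\epsilon,0)$-DP of the released parameters — and hence, by post-processing, of the $B$ predictions — follows.

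For part (i), note that by Assumptions~\ref{as:loss} and \ref{as:regularizer} the objective $J(\cdot;\calD)$ is $\lambda$-strongly convex (the $1$-strongly convex regularizer is scaled by $\lambda$, the loss is convex), so it is coercive with a unique minimizer characterized by a vanishing gradient, $\nabla J(\theta_1;\calD)=0$ and $\nabla J(\theta_2;\calD')=0$. Strong convexity applied at $\theta_1,\theta_2$ gives $\lambda\|\theta_1-\theta_2\|_F^2 \le \langle\nabla J(\theta_2;\calD)-\nabla J(\theta_1;\calD),\,\theta_2-\theta_1\rangle = \langle\nabla J(\theta_2;\calD)-\nabla J(\theta_2;\calD'),\,\theta_2-\theta_1\rangle$, where the second equality uses $\nabla J(\theta_1;\calD)=\nabla J(\theta_2;\calD')=0$. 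The regularizer terms cancel in the gradient difference, leaving $\frac1N$ times the difference of two per-example loss gradients. By Assumption~\ref{as:linear}, $\nabla_\theta \ell(\theta^\top\bx,\by)=\bx\,(\nabla\ell)^\top$, so each such term has Frobenius norm $\|\bx\|_2\|\nabla\ell\|_2 \le 1\cdot K$ by Assumptions~\ref{as:l2inp} and \ref{as:lip}; hence the gradient difference has Frobenius norm at most $\tfrac{2K}{N}$. Combining with Cauchy–Schwarz yields $\lambda\|\theta_1-\theta_2\|_F^2 \le \tfrac{2K}{N}\|\theta_1-\theta_2\|_F$, i.e. $\|\theta_1-\theta_2\|_F \le \tfrac{2K}{N\lambda}$.

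For part (ii), since the normalizing constant of $p(\bB)\propto e^{-\beta\|\bB\|_F}$ on $\mathbb{R}^{D\times C}$ is finite and independent of the center, the density of the released parameters at any point $\theta$ is $\propto e^{-\beta\|\theta-\theta_1\|_F}$ under $\calD$ and $\propto e^{-\beta\|\theta-\theta_2\|_F}$ under $\calD'$, with the same constant. The reverse triangle inequality gives a ratio bounded by $e^{\beta\|\theta_1-\theta_2\|_F} \le e^{\beta\cdot 2K/(N\lambda)} = e^\epsilon$ for the stated choice $\beta=\tfrac{N\lambda\epsilon}{2K}$. Integrating this pointwise bound over any measurable set shows the parameter release is $(\epsilon,0)$-differentially private, and since the $B$ published predictions are a deterministic function of the released $\theta$, closure under post-processing~\cite{dwork2011} yields Equation~\ref{eq:dp} for any inference budget (including $B=\infty$). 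Note that no $\delta$ term is needed precisely because the noise density has full support on $\mathbb{R}^{D\times C}$.

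I expect the main obstacle to be the sensitivity bound in part (i): one must be careful that the relevant norm is the Frobenius norm on the matrix $\theta\in\mathbb{R}^{D\times C}$ (so that the per-example gradient $\bx(\nabla\ell)^\top$ has norm exactly $\|\bx\|_2\|\nabla\ell\|_2$ and the noise density's spherical symmetry matches), that the Lipschitz constant $K$ is taken with respect to the loss argument $\theta^\top\bx$ (this is where linearity is used to transfer it to $\theta$), and that "differ in one training example" contributes two gradient terms — one removed, one added — giving the factor $2$ in $\tfrac{2K}{N\lambda}$ and hence in $\beta$. Once the sensitivity is established, the density comparison is routine.
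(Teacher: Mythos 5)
Your proof is correct and follows essentially the same route as the paper: bound the Frobenius sensitivity $\|\theta_1-\theta_2\|_F \le \tfrac{2K}{N\lambda}$ and then bound the pointwise density ratio of the Laplace-type noise via the (reverse) triangle inequality, concluding with post-processing. The only cosmetic difference is that you derive the stability step directly from $\lambda$-strong convexity and the first-order optimality conditions, whereas the paper invokes Lemma 7 of Chaudhuri et al.\ for that reduction and then bounds $\|\nabla g\|_F$ exactly as you do, so the substance is identical.
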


By changing $p(\bB)$ to a zero-mean isotropic Gaussian distribution with standard deviation $\sigma = \frac{2K \alpha}{N\lambda \sqrt{2\epsilon}}$ for an $\alpha$ that depends on $\epsilon$ and $\delta$, we can also obtain $(\epsilon, \delta)$-differentially private models for $\delta > 0$~\cite{balle2018improving}.

\begin{theorem}
  \label{thm:gaussian_model_perturb}
  Given assumptions \ref{as:loss}, \ref{as:regularizer}, \ref{as:linear},
  \ref{as:lip}, and \ref{as:l2inp}, the Gaussian model sensitivity method is
  $(\epsilon, \delta)$-differentially private for $\delta \in (0, 1)$.
\end{theorem}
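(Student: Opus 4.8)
\emph{Proof sketch.}
The plan is to reduce the claim to two standard ingredients: (i) a bound on the $L_2$-sensitivity (in Frobenius norm) of the minimizer $\theta(\calD) = \argmin_\theta J(\theta; \calD)$ under replacement of a single training example, and (ii) the analytic Gaussian mechanism~\cite{balle2018improving}, which calibrates isotropic Gaussian noise to that sensitivity to achieve $(\epsilon,\delta)$-differential privacy for any $\epsilon > 0$ and $\delta \in (0,1)$. The first ingredient is exactly the sensitivity computation that underlies Theorem~\ref{thm:model_perturb}; only the noise distribution added on top of it changes.

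First I would bound the sensitivity. Let $\calD$ and $\calD'$ differ in one example, and write $\theta = \theta(\calD)$, $\theta' = \theta(\calD')$ (unique by strict convexity of $\ell$ and strong convexity of $R$, Assumptions~\ref{as:loss} and~\ref{as:regularizer}). Since $R$ is $1$-strongly convex, $J(\cdot; \calD)$ is $\lambda$-strongly convex, so combining the strong-convexity inequality with the first-order optimality condition $\nabla_\theta J(\theta; \calD) = 0$ and Cauchy--Schwarz gives
\[
  \lambda \|\theta - \theta'\|_F \;\le\; \|\nabla_\theta J(\theta'; \calD) - \nabla_\theta J(\theta; \calD)\|_F \;=\; \|\nabla_\theta J(\theta'; \calD)\|_F .
\]
Because $\calD$ and $\calD'$ agree on all but one example, $\nabla_\theta J(\theta'; \calD)$ differs from $\nabla_\theta J(\theta'; \calD') = 0$ only through the loss gradients on the two differing examples. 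By the chain rule and linearity (Assumption~\ref{as:linear}), each such gradient has the form $\tfrac{1}{N}\,\bx\,(\nabla \ell)^\top$, and by $\|\nabla \ell\|_2 \le K$ (Assumption~\ref{as:lip}) together with $\|\bx\|_2 \le 1$ (Assumption~\ref{as:l2inp}) its Frobenius norm is at most $K/N$. Hence $\|\nabla_\theta J(\theta'; \calD)\|_F \le 2K/N$, so $\|\theta - \theta'\|_F \le \frac{2K}{N\lambda} =: \Delta_2$.

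Next I would invoke the analytic Gaussian mechanism: releasing $\theta(\calD) + \bB$ with $\bB \sim \calN(0, \sigma^2 \bI)$ (an isotropic Gaussian on $\mathbb{R}^{D\times C}$) is $(\epsilon,\delta)$-differentially private whenever $\sigma \ge \Delta_2\,\alpha / \sqrt{2\epsilon}$, where $\alpha = \alpha(\epsilon,\delta)$ is the calibration constant chosen so that $\Phi\!\left(\tfrac{\Delta_2}{2\sigma} - \tfrac{\epsilon\sigma}{\Delta_2}\right) - e^\epsilon\,\Phi\!\left(-\tfrac{\Delta_2}{2\sigma} - \tfrac{\epsilon\sigma}{\Delta_2}\right) \le \delta$~\cite{balle2018improving}. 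Substituting $\Delta_2 = \frac{2K}{N\lambda}$ recovers exactly the stated choice $\sigma = \frac{2K\alpha}{N\lambda\sqrt{2\epsilon}}$. Finally, each prediction $\phi(\hat\bx; \theta)$ is a post-processing of the privatized parameters $\theta$, so by closure of differential privacy under post-processing~\cite{dwork2011} the guarantee in Equation~\ref{eq:dp} holds for every $B$, including $B = \infty$.

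The main obstacle is pinning down the sensitivity constant in the multi-class ($\theta \in \mathbb{R}^{D\times C}$) setting: one must verify that ``$\|\nabla \ell\|_2 \le K$'' refers to the Euclidean norm of the gradient of $\ell$ with respect to its $C$-dimensional first argument, and that differentiating through the linear map $\theta \mapsto \theta^\top\bx$ contributes only the factor $\|\bx\|_2 \le 1$, so that the per-example gradient has Frobenius norm at most $K/N$. The other point requiring care is that we rely on the \emph{analytic} Gaussian mechanism rather than the classical one: the classical bound $\sigma \ge \sqrt{2\ln(1.25/\delta)}\,\Delta_2/\epsilon$ is valid only for $\epsilon < 1$, whereas the $\alpha(\epsilon,\delta)$ calibration is valid for all $\epsilon > 0$ and $\delta \in (0,1)$, which is what the theorem claims.
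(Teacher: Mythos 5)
Your proposal is correct and follows essentially the same route as the paper: bound the Frobenius-norm sensitivity of the regularized minimizer by $\frac{2K}{N\lambda}$ (the paper's Lemma~\ref{lem:sensitivity_bound}, which it obtains by citing Lemma~7 of~\cite{chaudhuri2011}, while you re-derive the same bound directly from $\lambda$-strong convexity of $J$ and first-order optimality), and then calibrate isotropic Gaussian noise via the analytic Gaussian mechanism of~\cite{balle2018improving} with $\sigma = \frac{2K\alpha}{N\lambda\sqrt{2\epsilon}}$. Your closing remarks on post-processing and on why the analytic (rather than classical) Gaussian mechanism is needed for all $\epsilon>0$, $\delta\in(0,1)$ are consistent with the paper's treatment.
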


\noindent\textbf{\underline{Loss perturbation.}}
Rather than perturbing the learned parameters after training, the model owner can instead randomly perturb the loss function that is minimized during training to obtain a differentially private model~\cite{chaudhuri2011,kifer2012}.
We extend the loss perturbation method of~\cite{chaudhuri2011,kifer2012} to multi-class classification by minimizing the following randomly perturbed loss function:
\begin{equation}
    J'(\theta; \calD) = \frac{1}{N} \sum_{n=1}^N \ell(\phi'(\bx_n; \theta), \by_n)  + \frac{\lambda}{N} R(\theta) + \frac{1}{N} \text{tr}(\bB^\top \theta) + \frac{1}{2}\rho \lVert \theta \rVert_F^2,
\end{equation}
where $\text{tr}(\cdot)$ represents the trace of a square matrix.
The noise matrix $\bB$ is sampled from $p(\bB) \propto e^{-\beta \|\bB\|_F}$.
The scale parameter of the noise distribution $\beta = \frac{\epsilon}{2K}$, and the parameter $\rho$ that governs the additional $L_2$ regularization is set such that $\rho \ge \frac{2LC}{\epsilon}$.
The constant $L$ is an upper bound on the eigenvalues of the Hessian of $\ell(\cdot)$, that is, $\lambda_{\max}(\nabla^2 \ell(\theta^\top \bx, \by)) \le L$ for all $\theta, \bx$ and $\by$. We use a multi-class logistic loss, which has a Hessian with eigenvalues bounded by $L = \frac{1}{2}$ (see appendix).

\begin{theorem}
  \label{thm:loss_perturb}
  Given a convex loss function $\ell(\cdot)$, a convex regularizer $R(\cdot)$ with
  continuous Hessians, assumptions \ref{as:linear}, \ref{as:lip}, and
  \ref{as:l2inp}, and assuming that  $\lambda_{\max}(\nabla^2 \ell) \le L$, the
  loss perturbation method is $(\epsilon, 0)$-differentially private.
\end{theorem}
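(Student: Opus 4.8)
The plan is to adapt the objective-perturbation argument of \cite{chaudhuri2011, kifer2012} to the matrix-valued parameter $\theta \in \mathbb{R}^{D \times C}$. Because $\ell(\cdot)$ and $R(\cdot)$ are convex and $J'(\theta;\calD)$ contains the term $\frac{1}{2}\rho\|\theta\|_F^2$ with $\rho > 0$, the perturbed objective is strongly convex, so for every draw of the noise $\bB$ it has a unique minimizer $\theta(\calD)$. First I would write the first-order optimality condition $\nabla_\theta J'(\theta(\calD);\calD) = 0$ and solve it for $\bB$: after multiplying through by $N$ this reads $\bB = -\sum_{n=1}^N \nabla_\theta \ell(\theta^\top\bx_n, \by_n) - \lambda\nabla R(\theta) - N\rho\,\theta$, evaluated at $\theta = \theta(\calD)$. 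For each fixed dataset this exhibits a $C^1$ bijection $\bB(\,\cdot\,;\calD)$ between the released parameters and the noise: it is injective because strong convexity makes the stationary point unique, surjective because a minimizer exists for every $\bB$, and a diffeomorphism because $\ell$ and $R$ have continuous Hessians. Its Jacobian with respect to $\theta$ is $-\big(\mathbf{H}_\calD(\theta) + N\rho\,\bI\big)$, where $\mathbf{H}_\calD(\theta) := \sum_{n}\nabla^2_\theta \ell(\theta^\top\bx_n,\by_n) + \lambda\nabla^2 R(\theta) \succeq 0$; this matrix is negative definite, so its determinant never vanishes.

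The second step is the change-of-variables formula for densities: the density of the output at $\theta$ is $p_\bB\!\big(\bB(\theta;\calD)\big)\,\big|\det\!\big(\mathbf{H}_\calD(\theta) + N\rho\,\bI\big)\big|$. For datasets $\calD,\calD'$ differing in a single example, the privacy-loss ratio therefore factors as a noise-density ratio $p_\bB(\bB(\theta;\calD))/p_\bB(\bB(\theta;\calD'))$ times a Jacobian-determinant ratio $\big|\det(\mathbf{H}_\calD+N\rho\bI)\big|/\big|\det(\mathbf{H}_{\calD'}+N\rho\bI)\big|$, and it suffices to bound the product by $e^\epsilon$ uniformly in $\theta$, which is exactly $(\epsilon,0)$-differential privacy. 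I would bound the noise-density ratio using assumptions \ref{as:linear}, \ref{as:lip}, and \ref{as:l2inp}: linearity gives $\nabla_\theta\ell(\theta^\top\bx,\by) = \bx\,(\nabla_z\ell)^\top$, so $\|\nabla_\theta\ell\|_F = \|\bx\|_2\|\nabla_z\ell\|_2 \le K$, and hence $\bB(\theta;\calD)$ and $\bB(\theta;\calD')$ differ by at most $2K$ in Frobenius norm; since $p(\bB) \propto e^{-\beta\|\bB\|_F}$, the triangle inequality bounds this factor by $e^{2\beta K}$, which the stated $\beta$ controls.

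The Jacobian-determinant ratio is the step I expect to be the main obstacle, and it is the reason the extra $\frac{1}{2}\rho\|\theta\|_F^2$ term is added to the objective in the first place. The structural fact I would exploit is that in a linear model each per-example Hessian $\nabla^2_\theta \ell(\theta^\top\bx_n,\by_n)$ equals $\nabla^2_z \ell \otimes \bx_n\bx_n^\top$ (with $z = \theta^\top\bx_n$): it is positive semidefinite, has rank at most $C$, and has spectral norm at most $L$, using $\lambda_{\max}(\nabla^2\ell) \le L$ and $\|\bx_n\|_2\le1$. Writing $\mathbf{H}_\calD + N\rho\bI = \mathbf{P} + \bE_-$ and $\mathbf{H}_{\calD'} + N\rho\bI = \mathbf{P} + \bE_+$, where $\bE_-,\bE_+$ are the per-example Hessians of the two differing examples and $\mathbf{P} \succeq N\rho\,\bI$ collects the common terms together with the ridge, one bounds the ratio by $\det(\bI + \mathbf{P}^{-1/2}\bE_- \mathbf{P}^{-1/2}) \le (1 + L/(N\rho))^{C}$ (and symmetrically with $\bE_+$), so the choice $\rho \ge 2LC/\epsilon$ keeps this factor under control; combining the two factors then yields the $e^\epsilon$ bound. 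In the binary case of \cite{chaudhuri2011} the per-example Hessians have rank $1$, whereas here they have rank up to $C$, which is what forces the $\rho = \Theta(LC/\epsilon)$ scaling. Finally, a separate routine computation, deferred to the appendix, verifies that the multi-class logistic loss satisfies $\lambda_{\max}(\nabla^2\ell) \le \frac{1}{2}$, as claimed in the surrounding text.
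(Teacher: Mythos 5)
Your proposal follows the paper's own proof essentially step for step: the stationarity condition defines a bijection between the released parameters and the noise, the output density factors by change of variables into a noise-density ratio times a Jacobian-determinant ratio, the noise ratio is handled with the triangle inequality and the Lipschitz/unit-ball assumptions, and the Jacobian ratio is handled through the Kronecker structure of the per-example Hessians ($\nabla^2\ell \otimes \bx\bx^\top$ has rank at most $C$ and top eigenvalue at most $L$) together with the determinant lemma $\det(\mathbf{P}+\bE)/\det(\mathbf{P}) \le (1+\lambda_{\max}(\mathbf{P}^{-1}\bE))^C$. This is exactly the paper's argument (its Lemma 2 is your determinant bound), so there is no difference of route.

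The genuine gap is in the final accounting. You take $\calD$ and $\calD'$ to differ by \emph{replacement} of one example, so the two noise values differ by the difference of two per-example gradients, of Frobenius norm up to $2K$, and your noise-density ratio bound is $e^{2\beta K}$. With the stated $\beta = \epsilon/(2K)$ this factor alone is already $e^{\epsilon}$, so after multiplying by the Jacobian factor, which in the unfavorable direction can be as large as $(1+L/(N\rho))^{C} > 1$, your estimates give a bound strictly larger than $e^{\epsilon}$; the claim that ``combining the two factors then yields the $e^\epsilon$ bound'' does not follow from what you derived. The paper closes the books by using add/remove adjacency: it takes $\calD'$ to contain one extra example, so the noise difference is a single gradient of norm at most $K$ (giving $e^{\epsilon/2}$) and the Jacobian ratio involves a single rank-$\le C$ update $\bE$ (giving $(1+L/\rho)^C \le e^{\epsilon/2}$ via $\rho \ge 2LC/\epsilon$), and the product is exactly $e^{\epsilon}$. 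Under your replacement convention you would have to retune the constants (e.g.\ $\beta = \epsilon/(4K)$ with a correspondingly larger $\rho$), or compose a removal and an addition at the cost of a factor of two in $\epsilon$. Two minor points: your ridge appears in the Jacobian as $N\rho\,\bI$ because you follow the main-text objective where the $\rho$-term is not divided by $N$, while the appendix algorithm uses $\rho\|\theta\|_F^2/(2N)$ and hence $\rho\,\bI$ (this only changes how loose the Jacobian factor is); and in your determinant step the matrix appearing in the one-sided bound should be the update present in the numerator after dropping the positive semidefinite update in the denominator.
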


By changing noise distribution $p(\bB)$ into a zero-mean isotropic Gaussian distribution with standard deviation $\sigma = \frac{K}{\epsilon} \sqrt{8 \ln(2 / \delta) + 4\epsilon}$, we can also obtain $(\epsilon, \delta)$-differential privacy for $\delta > 0$~\cite{kifer2012}.

\begin{theorem}
  Given a convex loss $\ell(\cdot)$, a convex regularizer $R(\cdot)$ with
  continuous Hessians, assumptions \ref{as:linear}, \ref{as:lip}, and
  \ref{as:l2inp}, and assuming that  $\lambda_{\max}(\nabla^2 \ell) \le L$, the
  Gaussian loss perturbation method is $(\epsilon, \delta)$-differentially private.
\end{theorem}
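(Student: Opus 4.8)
The plan is to reduce to the Gaussian mechanism by reusing the objective‑perturbation argument that proves Theorem~\ref{thm:loss_perturb}, modifying only its final step. First I would recall the structure of that proof. Because the term $\frac12\rho\|\theta\|_F^2$ with $\rho>0$ makes $J'(\cdot;\calD)$ strongly convex, the minimizer is unique (this is why plain convexity of $\ell$ and $R$ suffices, so assumptions~\ref{as:loss} and~\ref{as:regularizer} are not needed), and the first‑order optimality condition
\begin{equation}
\bB = -\sum_{n=1}^N \nabla_\theta \ell(\theta^\top\bx_n,\by_n) - \lambda\nabla R(\theta) - N\rho\,\theta
\end{equation}
defines, for every target output $\theta$, the unique perturbation matrix $\bB(\theta;\calD)$ that makes $\theta$ the minimizer. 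A change of variables then writes the density of the released parameters as $p(\theta\mid\calD) = \pi\!\left(\bB(\theta;\calD)\right)\,\big|\det \partial_\theta \bB(\theta;\calD)\big|$, where $\pi$ is the noise density.

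Next I would take the log‑ratio of $p(\theta\mid\calD)$ and $p(\theta\mid\calD')$ for neighboring datasets and split it into a Jacobian‑determinant term and a noise‑density term. The determinant term is identical to the $(\epsilon,0)$ proof and I would reuse it verbatim: replacing one training example perturbs the symmetric matrix $\partial_\theta\bB$ by a matrix of rank $\le C$ with eigenvalues in $[0,L]$ (using $\lambda_{\max}(\nabla^2\ell)\le L$ together with assumptions~\ref{as:linear} and~\ref{as:l2inp}), while $\partial_\theta\bB \preceq -N\rho\,\bI$, so the determinant ratio is bounded deterministically once $\rho\ge\frac{2LC}{\epsilon}$. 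For the noise‑density term I would first record the sensitivity bound: $\bB(\theta;\calD)-\bB(\theta;\calD')$ equals the difference of the loss gradients of the two differing examples, and each such gradient has Frobenius norm $\|\bx\|_2\,\|\nabla\ell\|_2\le K$ by assumptions~\ref{as:lip},~\ref{as:linear},~\ref{as:l2inp}, so $\|\bB(\theta;\calD)-\bB(\theta;\calD')\|_F\le 2K$. Since $\pi$ is now Gaussian this ratio cannot be bounded uniformly; instead I would bound the probability, over the noise (which under $\calD$ makes $\bB(\theta;\calD)\sim\calN(0,\sigma^2\bI)$), that
\begin{equation}
\log\frac{\pi(\bB(\theta;\calD))}{\pi(\bB(\theta;\calD'))} = \frac{1}{2\sigma^2}\Big(2\,\text{tr}\big((\bB(\theta;\calD')-\bB(\theta;\calD))^\top\bB(\theta;\calD)\big) + \big\|\bB(\theta;\calD)-\bB(\theta;\calD')\big\|_F^2\Big)
\end{equation}
exceeds the residual privacy budget. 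Carrying this through as in \cite{kifer2012} yields the $(\epsilon,\delta)$ guarantee once $\sigma$ meets the Gaussian‑mechanism requirement with $L_2$‑sensitivity $2K$; one then checks that the stated $\sigma=\frac{K}{\epsilon}\sqrt{8\ln(2/\delta)+4\epsilon}$ (equivalently $\frac{2K}{\epsilon}\sqrt{2\ln(2/\delta)+\epsilon}$) satisfies it.

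The main obstacle is exactly the step where the uniform bound fails: the effective sensitivity matrix $\bB(\theta;\calD)-\bB(\theta;\calD')$ is itself a function of the released $\theta$, hence of the Gaussian noise over which the tail bound is taken, so the textbook Gaussian mechanism cannot be invoked as a black box. I would handle this following \cite{chaudhuri2011,kifer2012}: after the deterministic determinant term is subtracted off, the set of outputs on which the noise‑density log‑ratio exceeds its allotment is contained in an event for the Gaussian noise matrix whose probability \cite{kifer2012} shows to be controlled by $\sigma$ and the sensitivity $2K$ alone, uniformly over the dependence on $\theta$. The remaining work is bookkeeping: confirming $K=\sqrt2$ and $L=\tfrac12$ for the multi‑class logistic loss (deferred to the appendix) and tracking how $\epsilon$ is apportioned between the determinant and noise contributions so that the stated $\rho$ and $\sigma$ together give an overall $(\epsilon,\delta)$ bound.
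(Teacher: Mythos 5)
Your plan follows essentially the same route as the paper's proof: a change of variables from the released parameters to the noise matrix, reuse of the deterministic Jacobian-determinant bound $e^{\epsilon/2}$ from the $(\epsilon,0)$ case via $\rho \ge \frac{2LC}{\epsilon}$ and the rank-$\le C$ eigenvalue argument, and a high-probability (over the Gaussian noise) bound on the noise-density ratio in the style of Kifer et al., combined to give $e^{\epsilon}$ with probability at least $1-\delta$. You also correctly flag the one real subtlety — that the effective sensitivity vector depends on the released $\theta$, so the textbook Gaussian mechanism cannot be invoked as a black box — and handle it the same way the paper does, by deferring to the tail-bound argument of \cite{kifer2012}.

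The one place your argument would not go through as written is the sensitivity constant and the final calibration check. The paper's proof takes the add/remove adjacency, $\calD' = \calD \cup \{(\bx',\by')\}$, so that $\bb' - \bb = \nabla\ell \otimes \bx'$ and $\|\bb'-\bb\|_2 \le K$; with sensitivity $K$, the Kifer-style bound $\exp\bigl(\frac{1}{2\sigma^2}(\sigma K\sqrt{8\ln(2/\delta)} + K^2)\bigr)$ is at most $e^{\epsilon/2}$ exactly when $\sigma \ge \frac{K}{\epsilon}\sqrt{8\ln(2/\delta)+4\epsilon}$, which is the stated value. You instead take the replace-one adjacency and sensitivity $2K$, and then assert the stated $\sigma$ ``satisfies'' the requirement; it does not. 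With sensitivity $2K$, the same tail-bound computation gives a noise-ratio exponent of roughly $\epsilon$ (not $\epsilon/2$) at the stated $\sigma$, which combined with the Jacobian's $e^{\epsilon/2}$ exceeds the budget — you would need $\sigma$ about twice as large. Your algebraic rewriting $\frac{K}{\epsilon}\sqrt{8\ln(2/\delta)+4\epsilon} = \frac{2K}{\epsilon}\sqrt{2\ln(2/\delta)+\epsilon}$ is correct but conflates the black-box Gaussian-mechanism calibration (which is unavailable here, and which would target the full $\epsilon$ rather than the residual $\epsilon/2$) with the objective-perturbation tail bound, so it does not substitute for the check. The fix is simply to adopt the adjacency the paper uses (sensitivity $K$), or, if you insist on replacement, to accept a doubled $\sigma$. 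A minor further slip: under the paper's objective $J(\theta;\calD) + \frac{1}{N}\mathrm{tr}(\bB^\top\theta) + \frac{\rho}{2N}\|\theta\|_F^2$, the stationarity condition yields a term $\rho\theta$ (after clearing the $1/N$ normalization), not $N\rho\theta$; this does not affect the structure of the argument but should be made consistent before bounding $\lambda_{\max}$ of the perturbed Hessian.
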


\begin{figure*}[!t]\centering
  \subfloat[\label{fig:1a}For privacy failure probability $\delta=0$.]{  \includegraphics[width=0.45\linewidth]{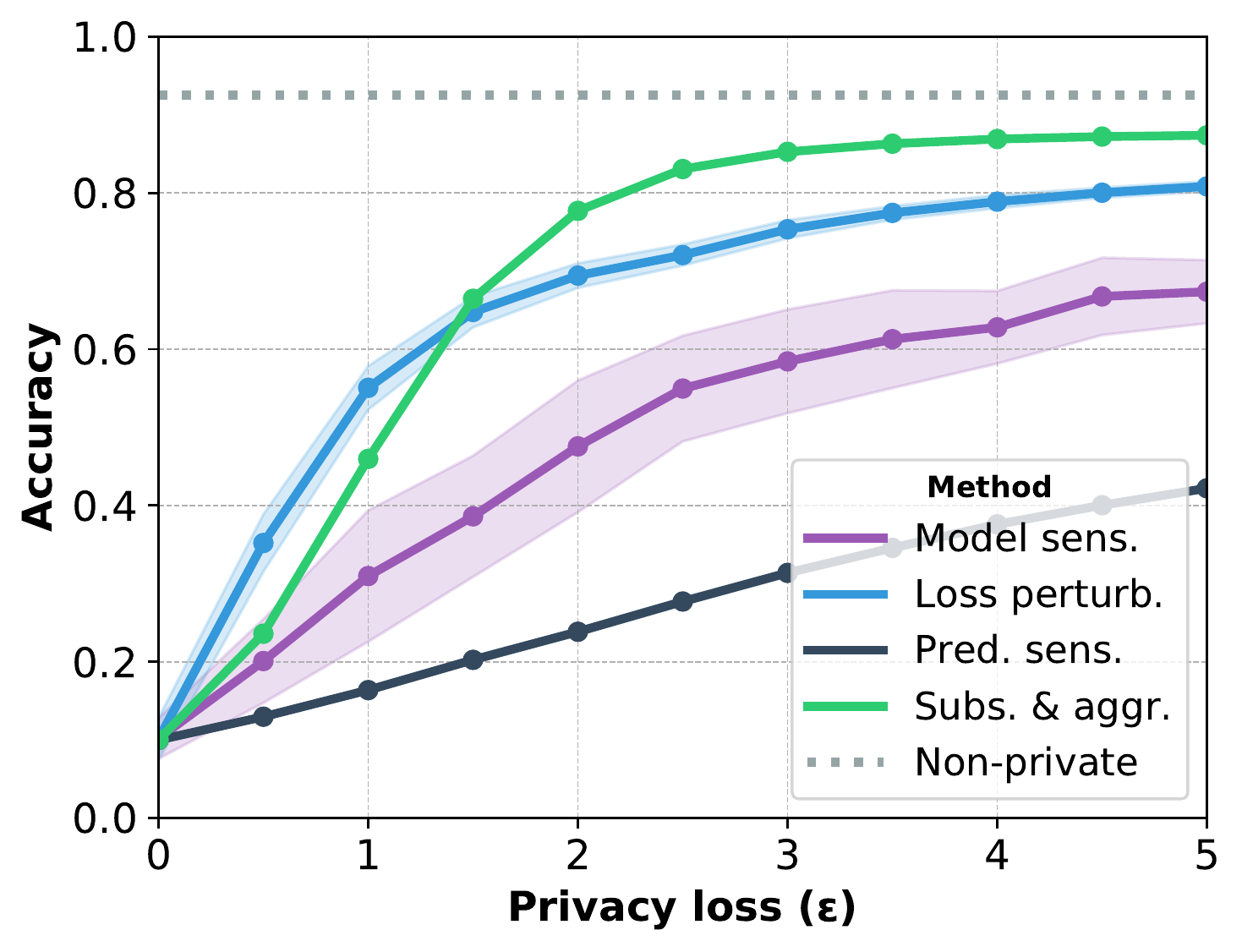}}
\hspace{6mm}
  \subfloat[\label{fig:1b}For privacy failure probability $\delta=10^{-5}$.]{  \includegraphics[width=0.455\linewidth]{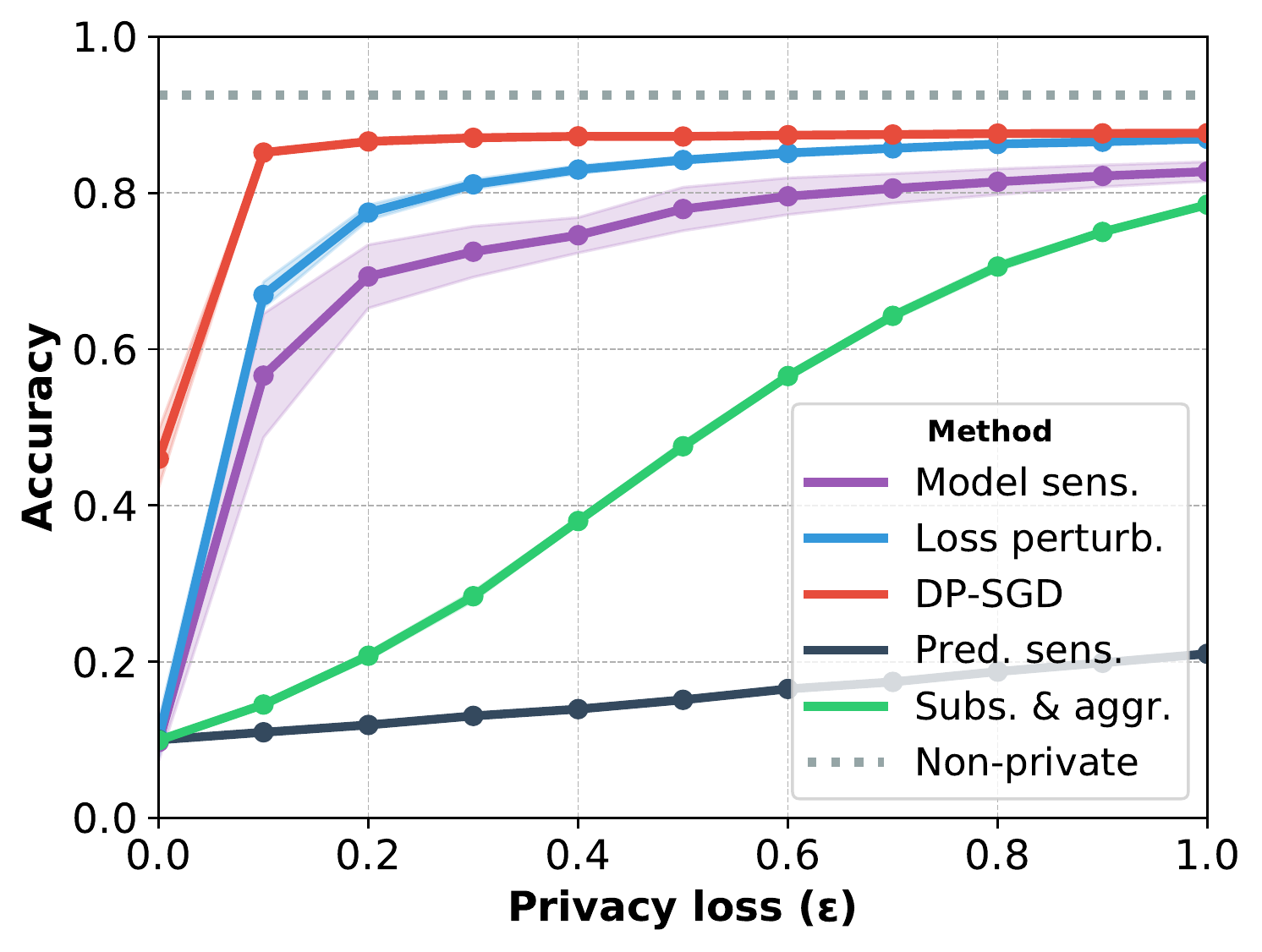}}
\caption{Test accuracy on MNIST dataset as a function of privacy loss $\epsilon$ for inference budget $B=100$. In~\ref{fig:1b}, $\epsilon$ ranges between $0$ and $1$ because of limitations in some methods when $\delta > 0$.}
\label{fig:1}
\end{figure*}

\noindent\textbf{\underline{Differentially private SGD (DP-SGD).}}
Rather than \emph{post hoc} perturbation of the final model parameters or \emph{pre hoc} perturbation of the loss, the DP-SGD method limits the influence of individual examples in parameter updates~\cite{abadi2016}.
DP-SGD draws a batch $\calB$ of training examples uniformly at random and computes the gradient $\bg_n = \frac{\partial J\left(\theta; \{(\bx_n, \by_n)\}\right)}{\partial \theta}$ for each example in the batch.
It clips the resulting per-example gradients to have a norm bound of $\nu$, aggregates the clipped gradients over the batch, and adds Gaussian noise to obtain a private, approximate parameter gradient $\tilde{\bg}$:
\begin{equation}
  \tilde{\bg} = \frac{1}{| \calB |}  \left[ \sum_{n \in \{1, \dots, | \calB |\}} \left(\frac{\bg_n}{\max \left(1, \frac{\lVert \bg_n \rVert_2}{\nu} \right)}\\\right) + \calN(\mathbf{0}, \sigma^2 \nu^2 \bI) \right],
\label{eq:dpsgd}
\end{equation}
where $\sigma^2 \nu^2$ is the variance of the Gaussian, $\mathbf{0}$ is an appropriately sized vector of zeros, and $\bI$ an appropriately sized identity matrix.
The resulting $\tilde{\bg}$ is used to perform the parameter update.
Akin to the model sensitivity method, choosing $\sigma = \frac{1}{\epsilon}\sqrt{2 \ln (1.25 / \delta)}$ makes each $\tilde{\bg}$ vector $(\epsilon, \delta)$-differentially private with respect to the examples in batch $\calB$.
The privacy amplification theorem~\cite{kasiviswanathan2011can} states that $\tilde{\bg}$ is $(q \epsilon, q \delta)$-differentially private with respect to examples from $\calD$, with $q = \nicefrac{ |\calB| }{N}$.

Privacy guarantees on the parameters learnt by DP-SGD after $M$ parameter updates can be obtained via a ``moments accountant''.
We use the moments accountant of~\cite{mironov2019}, which uses an analysis based on R{\'e}nyi differential privacy~\cite{mironov2017} to compute the noise scale $\sigma$ required for a given privacy loss $\epsilon$, privacy failure probability $\delta$, number of parameter updates $M$, batch size $| \calB |$, and training set size $N$.

\subsection{Private Prediction}
\label{sec:private_prediction}

We consider two private prediction methods in this study: (1) the prediction sensitivity method and (2) the subsample-and-aggregate method.

\noindent\textbf{\underline{Prediction sensitivity.}}
The prediction sensitivity method adds noise to the logits, $\hat{\bl} = \phi'(\hat{\bx}; \theta)$, predicted by the model.
Specifically, it returns $\hat{\bl} + \bb$, where the noise vector $\bb \in \mathbb{R}^C$ is sampled from $p(\bb) \propto e^{-\beta \|\bb\|_2}$.
Given an inference budget $B$, we use standard composition~\cite{dwork2006} and set $\beta=\frac{N\lambda\epsilon}{2KB}$.

\begin{theorem}
  \label{thm:prediction_sensitivity}
  Given assumptions \ref{as:loss}, \ref{as:regularizer}, \ref{as:linear},
  \ref{as:lip}, and \ref{as:l2inp}, the prediction sensitivity method is
  $(\epsilon, 0)$-differentially private.
\end{theorem}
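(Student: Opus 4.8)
The plan is to reduce the $B$-query guarantee to a per-query $L_2$-sensitivity bound on the predicted logits, and to obtain that bound from the stability of the regularized empirical risk minimizer, following the output-perturbation argument of~\cite{chaudhuri2011} but adapted to the matrix-valued multi-class parameterization. By assumptions~\ref{as:loss} and~\ref{as:regularizer}, $J(\cdot;\calD)$ is strongly convex, so $\theta(\calD)$ is well defined, and all the steps below are about a single, deterministically trained $\theta(\calD)$.

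First I would prove a parameter-stability lemma: for neighboring training sets $\calD,\calD'$ that differ only in their $n$-th example, $\|\theta(\calD) - \theta(\calD')\|_F \le \frac{2K}{N\lambda}$. Let $f(\theta)$ denote the part of $J(\cdot;\calD)$ that excludes the $n$-th loss term; by assumption~\ref{as:regularizer}, $f$ is $\lambda$-strongly convex (the remaining loss terms are convex by assumption~\ref{as:loss}, while $\lambda R$ supplies the strong-convexity modulus $\lambda$ under the paper's $\frac1N$-averaged normalization of $J$). Now $\theta(\calD)$ and $\theta(\calD')$ minimize, respectively, $f(\theta) + \frac1N\ell(\theta^\top\bx_n,\by_n)$ and $f(\theta) + \frac1N\ell(\theta^\top\bx'_n,\by'_n)$, i.e.\ $f$ plus a single convex term. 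Writing the first-order optimality conditions at the two minimizers, subtracting, and invoking strong convexity of $f$ together with monotonicity of the gradient of the common convex term yields $\|\theta(\calD) - \theta(\calD')\|_F \le \frac1\lambda$ times the norm of the difference of the two differing loss-term gradients (evaluated at a common point). By assumption~\ref{as:linear}, the gradient of a single loss term with respect to $\theta$ is the outer product $\bx(\nabla_{\bl}\ell)^\top$, whose Frobenius norm is $\|\bx\|_2\|\nabla_{\bl}\ell\|_2 \le K$ by assumptions~\ref{as:lip} and~\ref{as:l2inp}; hence that difference has norm at most $\frac{2K}{N}$, which gives the lemma.

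Next I would turn parameter stability into logit sensitivity and then into per-query privacy. For any query $\hat{\bx}$ with $\|\hat{\bx}\|_2 \le 1$, linearity and $\|\cdot\|_{op} \le \|\cdot\|_F$ give $\|\theta(\calD)^\top\hat{\bx} - \theta(\calD')^\top\hat{\bx}\|_2 \le \|\theta(\calD)-\theta(\calD')\|_F \le \frac{2K}{N\lambda} =: \Delta$. A single released prediction is $\hat{\bl} + \bb$ with $\bb$ drawn from the density proportional to $e^{-\beta\|\bb\|_2}$, so for any output $\mathbf{z}\in\mathbb{R}^C$ the ratio of the output densities under $\calD$ and $\calD'$ is $e^{\beta(\|\mathbf{z}-\hat{\bl}'\|_2 - \|\mathbf{z}-\hat{\bl}\|_2)} \le e^{\beta\|\hat{\bl}-\hat{\bl}'\|_2} \le e^{\beta\Delta}$ by the triangle inequality, the normalizing constants cancelling because the two densities are translates of one another. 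With $\beta = \frac{N\lambda\epsilon}{2KB}$ this is $(\epsilon/B, 0)$-differential privacy for one prediction. Finally, the $B$ released predictions are obtained by adding independent noise to $B$ logit vectors, each a deterministic function of $\theta(\calD)$, so the full release is a composition of $B$ mechanisms each $(\epsilon/B,0)$-differentially private; basic composition~\cite{dwork2006} then gives the $(\epsilon,0)$ guarantee of Equation~\ref{eq:dp}, uniformly over all query sets $\mathcal{Q}$ with $|\mathcal{Q}| = B$.

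I expect the only delicate point to be the stability lemma in the multi-class setting: one must verify that the strong-convexity modulus is exactly $\lambda$ given the $\frac1N$ normalization of $J$, and that the per-example gradient bound is the single constant $K$ rather than a quantity that scales with the number of classes, so that the sensitivity comes out as $\Delta = 2K/(N\lambda)$ and the stated $\beta$ is the right calibration. Once $\Delta$ is pinned down, the noise-distribution argument and the composition over the inference budget are routine.
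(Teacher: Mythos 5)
Your proposal is correct and follows essentially the same route as the paper: a strong-convexity stability bound $\|\theta(\calD)-\theta(\calD')\|_F \le \frac{2K}{N\lambda}$ on the regularized ERM minimizer (the paper obtains this via Lemma 7 of~\cite{chaudhuri2011}, which is exactly the first-order-optimality argument you sketch), Cauchy--Schwarz with $\|\hat{\bx}\|_2\le 1$ to transfer it to the logits, the triangle-inequality density-ratio bound for noise $\propto e^{-\beta\|\bb\|_2}$, and standard composition over the $B$ queries with $\beta = \frac{N\lambda\epsilon}{2KB}$. The only cosmetic difference is that you re-derive the stability lemma inline rather than citing it, and you make explicit the cancellation of normalizing constants, which the paper leaves implicit.
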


We can obtain $(\epsilon, \delta)$-differentially private predictions for $\delta > 0$ by sampling $\bb$ from a zero-mean isotropic Gaussian distribution~\cite{balle2018improving}.
Assuming $\delta > 0$ also allows the use of the advanced composition theorem~\cite{dwork2016concentrated,dwork2010boosting}.
We set the standard deviation to $\sigma = \frac{2K \alpha^*}{N\lambda\epsilon^*}$, where $\alpha^*$ depends on per-sample privacy values $\epsilon^*$ and $\delta^*$ that are found using a search algorithm (see appendix for details).

\begin{theorem}
  \label{thm:gaussian_prediction_sensitivity}
  Given assumptions \ref{as:loss}, \ref{as:regularizer}, \ref{as:linear},
  \ref{as:lip}, and \ref{as:l2inp}, the Gaussian prediction sensitivity method is
  $(\epsilon, \delta)$-differentially private for $\delta \in (0, 1)$.
\end{theorem}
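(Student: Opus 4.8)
\emph{Proof strategy.} The plan is to reduce the statement to a single-prediction Gaussian mechanism, calibrated to the $\ell_2$-sensitivity of the logit map, and then to account for the inference budget of $B$ queries via advanced composition. Concretely: (i) bound the sensitivity of $\hat{\bl} = \theta(\calD)^\top\hat{\bx}$ under a one-example change in $\calD$; (ii) show that the Gaussian mechanism with the stated $\sigma$ makes each released $\hat{\bl}+\bb$ satisfy $(\epsilon^*,\delta^*)$-differential privacy; (iii) compose over $\mathcal{Q}$ and invoke the search procedure to land on the target $(\epsilon,\delta)$.

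For step (i), assumptions~\ref{as:loss} and~\ref{as:regularizer} make $J(\theta;\calD) = \frac1N\sum_n \ell(\phi'(\bx_n;\theta),\by_n) + \lambda R(\theta)$ $\lambda$-strongly convex in $\theta\in\mathbb{R}^{D\times C}$ (with the Frobenius geometry), so the minimizer $\theta(\calD)$ is unique. I would then reuse the stability argument of~\cite{chaudhuri2011} that underlies Theorems~\ref{thm:model_perturb} and~\ref{thm:prediction_sensitivity}: writing the first-order optimality conditions for $\theta(\calD)$ and $\theta(\calD')$ and using assumptions~\ref{as:linear}, \ref{as:lip}, and~\ref{as:l2inp} to bound the per-example gradient difference by $2K$, one obtains $\|\theta(\calD)-\theta(\calD')\|_F \le \frac{2K}{N\lambda}$. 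Linearity of $\phi'$ together with $\|\hat{\bx}\|_2\le1$ then gives $\|\theta(\calD)^\top\hat{\bx}-\theta(\calD')^\top\hat{\bx}\|_2 \le \frac{2K}{N\lambda} =: \Delta$, a bound that is uniform over the query $\hat{\bx}$.

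For step (ii), adding $\bb\sim\mathcal{N}(\mathbf{0},\sigma^2\bI)$ with $\sigma = \frac{2K\alpha^*}{N\lambda\epsilon^*} = \frac{\Delta\,\alpha^*}{\epsilon^*}$ yields $(\epsilon^*,\delta^*)$-differential privacy for a single release, where $\alpha^*$ is exactly the multiplier that calibrates the Gaussian mechanism to $(\epsilon^*,\delta^*)$ (the classical choice $\alpha^*=\sqrt{2\ln(1.25/\delta^*)}$, or the tighter analytic value of~\cite{balle2018improving}). Since each of the $B$ queries in $\mathcal{Q}$ is answered by an independent copy of this mechanism and the per-release privacy loss is bounded uniformly (so the queries may even be chosen adaptively), advanced composition~\cite{dwork2010boosting,dwork2016concentrated} makes the $B$-fold release $(\epsilon,\delta)$-differentially private whenever $\epsilon \ge \sqrt{2B\ln(1/\delta')}\,\epsilon^* + B\epsilon^*(e^{\epsilon^*}-1)$ and $\delta \ge B\delta^* + \delta'$ for some slack $\delta'\in(0,\delta)$. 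The appendix's search procedure picks $(\epsilon^*,\delta^*)$ — hence $\alpha^*$, hence $\sigma$ — so that both inequalities hold for the prescribed $(\epsilon,\delta)$; post-processing closure~\cite{dwork2011} then covers any further deterministic map (e.g.\ a softmax) applied to $\hat{\bl}+\bb$.

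The main obstacle is the self-referential calibration rather than any single inequality: $\sigma$ depends on $\alpha^*$, which depends on $(\epsilon^*,\delta^*)$, which in turn must be chosen so that the composition bound over $B$ rounds equals $(\epsilon,\delta)$. I would handle this by treating the single-query Gaussian mechanism and advanced composition as established facts and then arguing feasibility of the search output from monotonicity of the composition bound in $\epsilon^*$ (at fixed $\delta^*$ and $\delta'$), which guarantees that a suitable $(\epsilon^*,\delta^*)$ exists and is found. A minor additional care point is that the sensitivity $\Delta$ must hold worst-case over $\hat{\bx}$, which it does because $\Delta = 2K/(N\lambda)$ is query-independent under assumption~\ref{as:l2inp}.
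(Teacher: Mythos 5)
Your proposal is correct and follows essentially the same route as the paper's appendix proof: the $\frac{2K}{N\lambda}$ logit-sensitivity bound via the strong-convexity stability lemma of Chaudhuri et al., a per-query Gaussian mechanism calibrated (analytically, per Balle and Wang) to $(\epsilon^*,\delta^*)$, and advanced composition with a search over the slack $\delta'$ to hit the prescribed $(\epsilon,\delta)$. The only differences are cosmetic — you quote the composition bound without the factor $\nicefrac{1}{2}$ on the second term (more conservative but still valid) and omit an explicit treatment of the standard-composition fallback $\sigma'$ used when $B$ is small, which is immediate anyway.
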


When the budget $B$ is small, the $\sigma$ obtained with standard composition can be smaller than that obtained with advanced composition. In experiments, we always choose the smaller $\sigma$ of the two.

\begin{figure*}[!t]\centering
  \subfloat[\label{fig:2a}For privacy failure probability $\delta=0$.]{  \includegraphics[width=0.45\linewidth]{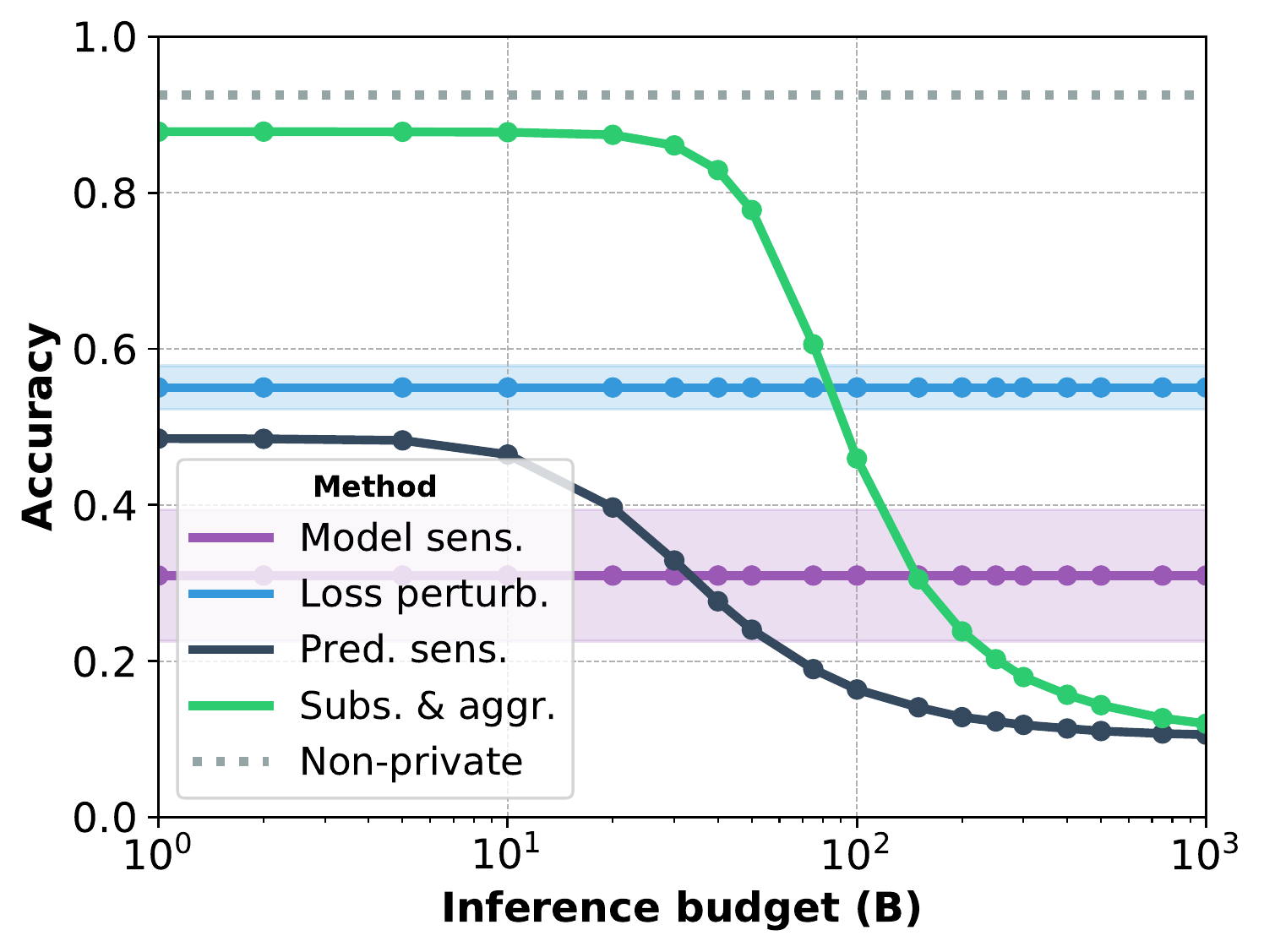}}
\hspace{6mm}
  \subfloat[\label{fig:2b}For privacy failure probability $\delta=10^{-5}$.]{  \includegraphics[width=0.45\linewidth]{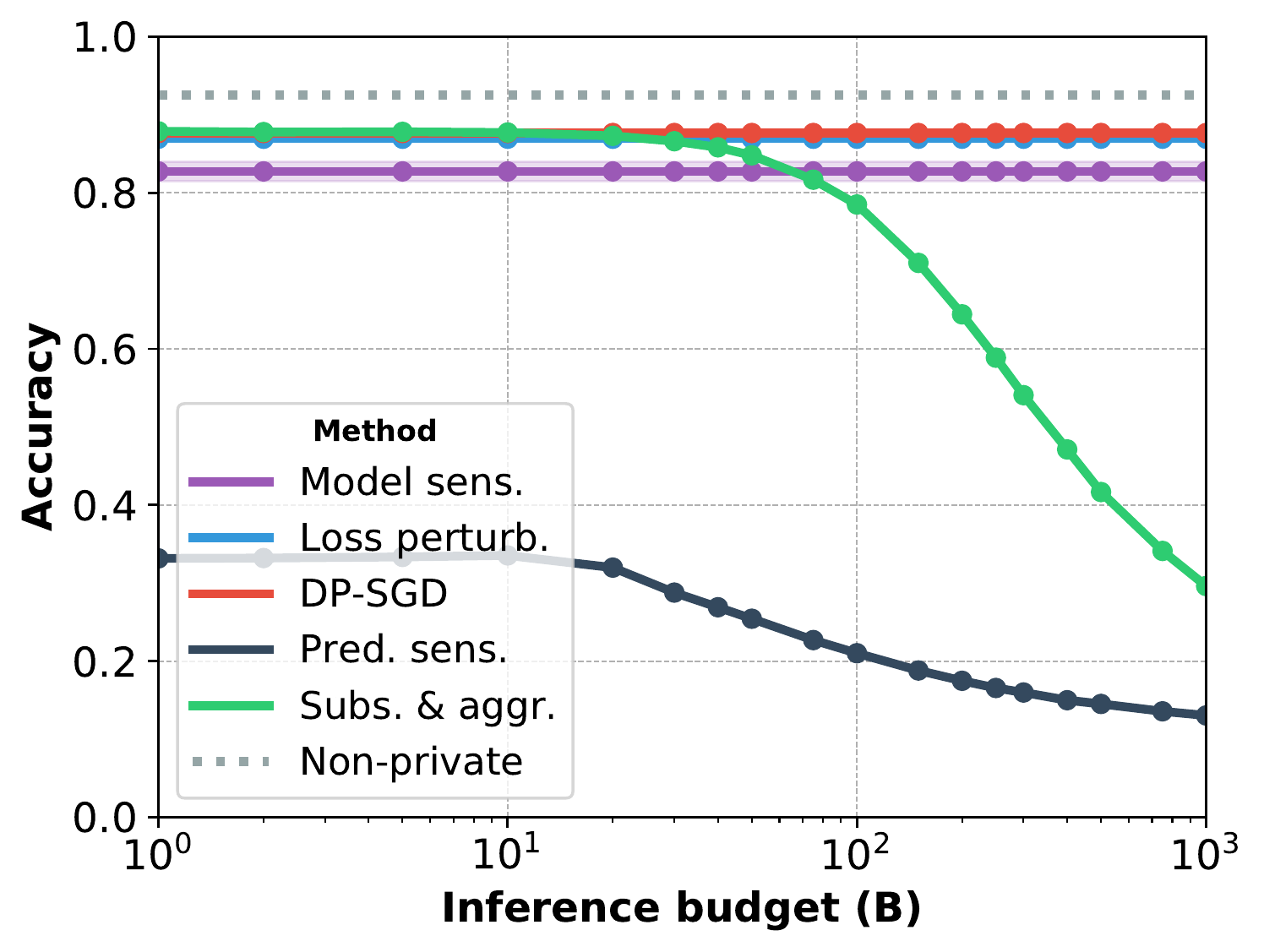}}
\caption{Test accuracy on MNIST dataset as function of inference budget $B$ for privacy loss $\epsilon=1$.}\
\label{fig:2}
\end{figure*}

\noindent\textbf{\underline{Subsample-and-aggregate.}}
Subsample-and-aggregate methods train $T$ models on $T$ subsets of $\calD$, and perform differentially private aggregation of the predictions produced by the resulting models~\cite{bassily2018,dwork2018,nandi2019,nissim07}.
We focus on the method of Dwork \& Feldman~\cite{dwork2018} in this study.
That technique: (1) partitions the training dataset $\calD$ into $T$ disjoint subsets of size $\lfloor \nicefrac{| \calD |}{T} \rfloor$, (2) trains $T$ classifiers $\{\phi'_1(\cdot; \theta_1), \dots, \phi'_T(\cdot; \theta_T) \}$ on these subsets where $\phi'_t(\cdot)$ outputs a one-hot vector of size $C$, and (3) applies a soft majority voting across the $T$ classifiers at inference time.
Hence, it predicts label $\hat{\by}$ for input $\hat{\bx}$ with probability proportional to $\exp(\beta \cdot | \{t : t \in \{1, \dots, T \}, \phi'_t(\hat{\bx}; \theta_t) = \hat{\by} \}  |)$.
The parameter $\beta$ acts as an ``inverse temperature'' in the voting: privacy increases but accuracy decreases as $\beta$ goes to zero.
Using the standard compositional properties of differential privacy~\cite{dwork2006}, this procedure achieves $(\epsilon, 0)$-differential privacy with an inference budget of $B$ by setting $\beta = \epsilon / B$.

\begin{theorem}
  \label{thm:subsample_and_aggregate}
  The subsample-and-aggregate method with $\beta = \epsilon / B$ is $(\epsilon, 0)$-differentially private.
\end{theorem}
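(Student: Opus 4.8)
The plan is to treat each of the $B$ predictions as a single invocation of the exponential mechanism over the $C$ labels and to combine the per-query guarantees by adaptive composition~\cite{dwork2006}. First I would fix an arbitrary query $\hat{\bx}$ and condition on the trained classifiers $\phi'_1(\cdot;\theta_1),\dots,\phi'_T(\cdot;\theta_T)$. The returned label is then sampled with probability proportional to $\exp\!\big(\beta\,u(\calD,\hat{\by})\big)$, where $u(\calD,\hat{\by}) = |\{t : \phi'_t(\hat{\bx};\theta_t) = \hat{\by}\}|$ is the number of votes for $\hat{\by}$; this is exactly the exponential mechanism with quality score $u$ and temperature parameter $\beta$, and $u$ depends on $\calD$ only through the (deterministic) training procedure.

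Second, I would bound the sensitivity of the vote count. The structural heart of the argument is that the $T$ subsets \emph{partition} $\calD$: provided the partition is formed independently of the data values (e.g.\ by index), a dataset $\calD'$ that differs from $\calD$ in a single example differs from it inside exactly one subset, so $T\!-\!1$ of the classifiers are trained on identical data and may be taken to be identical and only the remaining classifier can change its vote. Hence the two vote histograms differ by moving at most one unit of mass between two coordinates, which gives $|u(\calD,\hat{\by}) - u(\calD',\hat{\by})| \le 1$ for every $\hat{\by}$, with the counts in both histograms summing to $T$.

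Third, I would feed this sensitivity bound into the exponential-mechanism analysis. Writing the privacy-loss ratio $\Pr_\calD[\hat{\by}]/\Pr_{\calD'}[\hat{\by}]$ as the ratio of numerators $\exp(\beta\,u(\calD,\hat{\by}))/\exp(\beta\,u(\calD',\hat{\by}))$ times the ratio of normalizing constants $\sum_{\hat{\by}'}\exp(\beta\,u(\calD',\hat{\by}'))\,/\,\sum_{\hat{\by}'}\exp(\beta\,u(\calD,\hat{\by}'))$, and bounding each factor under the one-unit perturbation of the histogram, shows that releasing one prediction is $\beta$-differentially private. This bound is uniform over $\hat{\bx}$, so it holds for every admissible query set $\mathcal{Q}$ with $|\mathcal{Q}| = B$; and since $\delta = 0$, (adaptive) basic composition over the $B$ queries adds the per-query losses — this also covers queries chosen adaptively in light of earlier predictions — giving a total privacy loss of $\beta B$. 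Setting $\beta = \epsilon/B$ then yields the stated $(\epsilon,0)$-differential privacy.

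The step I expect to demand the most care is obtaining the sharp per-query constant in the previous paragraph. The single changed classifier moves one vote \emph{off} one label and \emph{onto} another, so a direct application of the generic exponential-mechanism bound would lose a factor of two; recovering the tight per-query loss of $\beta$ (as in Dwork \& Feldman~\cite{dwork2018}) requires exploiting the particular structure of majority voting — that the vote counts always sum to $T$ and that a neighboring dataset shifts only a single vote — when bounding the ratio of normalizing constants. A secondary, routine point is to observe that $\calD \mapsto (\theta_1,\dots,\theta_T)$ is deterministic (or can be coupled across the $T\!-\!1$ unaffected subsets), so that conditioning on the classifiers and analysing only the aggregation step is enough and no further post-processing argument is required.
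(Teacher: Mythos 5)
Your overall architecture is the same as the paper's: view each answer as an exponential mechanism over the $C$ labels with utility equal to the vote count, observe that the disjoint partition forces the vote histogram to change by a single shifted vote between neighboring datasets (sensitivity $1$), and then apply basic ($\delta=0$) composition over the $B$ queries with $\beta=\epsilon/B$. Where you differ is that you explicitly acknowledge the ratio of normalizing constants and defer the "sharp per-query constant" to a structural argument you do not supply; the paper's proof instead simply equates $p(\calA(\bx,\calD)=\by)/p(\calA(\bx,\calD')=\by)$ with the ratio of the unnormalized weights $\exp(\beta\,u(\calD,\by))/\exp(\beta\,u(\calD',\by))$, i.e.\ it silently drops the normalizer ratio altogether.

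This deferred step is a genuine gap, and it cannot be closed in the way you hope. The generic exponential-mechanism bound gives $2\beta$ per query, and the special structure you invoke (counts summing to $T$, a neighboring dataset shifting exactly one vote) does not restore the factor of two, because the utility is not monotone: one count goes up while another goes down. Concretely, take $C=2$, $T=2$, and a query on which the vote histogram is $(1,1)$ under $\calD$ and $(0,2)$ under a neighboring $\calD'$. Then $p_{\calD}(a)=\tfrac12$ while $p_{\calD'}(a)=1/(1+e^{2\beta})$, so $p_{\calD}(a)/p_{\calD'}(a)=(1+e^{2\beta})/2>e^{\beta}$ for every $\beta>0$; the single release is not $\beta$-differentially private, only $2\beta$-differentially private. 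Consequently the route you outline (and, for the same reason, the paper's own calculation, which is missing the normalizer term) only yields $(2\epsilon,0)$-differential privacy at $\beta=\epsilon/B$; to get the stated $(\epsilon,0)$ guarantee by this argument one must take $\beta=\epsilon/(2B)$. Your secondary point — that the map $\calD\mapsto(\theta_1,\dots,\theta_T)$ is deterministic with $T-1$ classifiers coinciding across neighbors, so it suffices to analyze the aggregation step — is correct and matches the paper.
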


If we allow $\delta > 0$, we can use the advanced
composition theorem~\cite{dwork2016concentrated, dwork2010boosting} in subsample-and-aggregate to achieve
$(\epsilon, \delta)$-differential privacy with noise scale $\beta$ inversely proportional
to the square of $B$.

\begin{theorem}
  \label{thm:delta_subsample_and_aggregate}
  The subsample-and-aggregate method with $\beta=\max(\beta', \beta'')$, where $\beta' = \epsilon / B$ and $\beta'' =  \sqrt{2 / B}\left(\sqrt{\ln(1/\delta) + \epsilon} - \sqrt{\ln(1/\delta)}\right)$, is $(\epsilon, \delta)$-differentially private.
\end{theorem}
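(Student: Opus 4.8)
The plan is to reduce to Theorem~\ref{thm:subsample_and_aggregate} and to the advanced composition theorem, and then to case-split on which of $\beta'$ and $\beta''$ attains the maximum. First I would record the per-query guarantee: because the $T$ training subsets are disjoint, replacing a single example of $\calD$ retrains exactly one classifier $\phi'_t$, so it changes the vote count $|\{t : \phi'_t(\hat{\bx}; \theta_t) = c\}|$ of every label $c$ by at most one. Hence a single soft-voting query with inverse temperature $\beta$ is the exponential mechanism applied to a unit-sensitivity utility, and by the argument underlying Theorem~\ref{thm:subsample_and_aggregate} it is $(\beta, 0)$-differentially private.

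Next I would split into two branches according to the value of $\beta = \max(\beta', \beta'')$. In the branch $\beta = \beta' = \epsilon / B$, standard composition~\cite{dwork2006} over the $B$ queries in the inference budget gives exactly the conclusion of Theorem~\ref{thm:subsample_and_aggregate}, namely $(\epsilon, 0)$-differential privacy, which is in particular $(\epsilon, \delta)$-differentially private for every $\delta \in (0,1)$. In the branch $\beta = \beta''$, I would instead invoke the advanced composition theorem~\cite{dwork2016concentrated, dwork2010boosting}: the $B$-fold adaptive composition of $(\beta, 0)$-differentially private mechanisms is $(\epsilon_\star, \delta)$-differentially private with $\epsilon_\star = \tfrac{1}{2} B \beta^2 + \beta\sqrt{2B\ln(1/\delta)}$. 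A short computation then shows that $\beta''$ is precisely the positive root of the quadratic $\tfrac{1}{2} B \beta^2 + \beta\sqrt{2B\ln(1/\delta)} - \epsilon = 0$, so substituting $\beta = \beta''$ yields $\epsilon_\star = \epsilon$ and hence $(\epsilon, \delta)$-differential privacy.

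Finally, since $\max(\beta', \beta'')$ equals either $\beta'$ or $\beta''$, one of the two branches applies and certifies $(\epsilon, \delta)$-differential privacy; taking the larger of the two is what one wants in practice, because a larger $\beta$ means less obfuscation of the vote and thus higher accuracy, and the theorem merely asserts that whichever composition bound is tighter in the given regime still suffices. The only real work is the advanced-composition branch: instantiating the theorem with the constant matching this paper's conventions and then verifying, by solving the quadratic, that $\beta''$ is the exact threshold; the rest is bookkeeping and a direct appeal to Theorem~\ref{thm:subsample_and_aggregate}.
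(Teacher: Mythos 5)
Your proposal is correct and takes essentially the same route as the paper's proof: each soft-voting query is $(\beta,0)$-differentially private exactly as in Theorem~\ref{thm:subsample_and_aggregate}, the $\beta=\beta'$ branch is covered by standard composition (equivalently, by Theorem~\ref{thm:subsample_and_aggregate} itself), and the $\beta=\beta''$ branch follows by taking $\beta''$ to be the positive root of the quadratic obtained from advanced composition over the $B$ queries. The only (shared) liberty is the constant in that quadratic: you quote the composed guarantee as $\tfrac{1}{2}B\beta^{2}+\beta\sqrt{2B\ln(1/\delta)}$, whereas Theorem 1.1 of~\cite{dwork2016concentrated} gives $\tfrac{1}{2}B\beta\left(e^{\beta}-1\right)+\beta\sqrt{2B\ln(1/\delta)}$, and the paper makes the very same replacement of $e^{\beta}-1$ by $\beta$ via $1+x\le e^{x}$ before solving the quadratic, so your argument and the paper's coincide on this point as well.
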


\begin{figure*}[!t]
\begin{minipage}{0.465\linewidth}
\centering
\includegraphics[width=\linewidth]{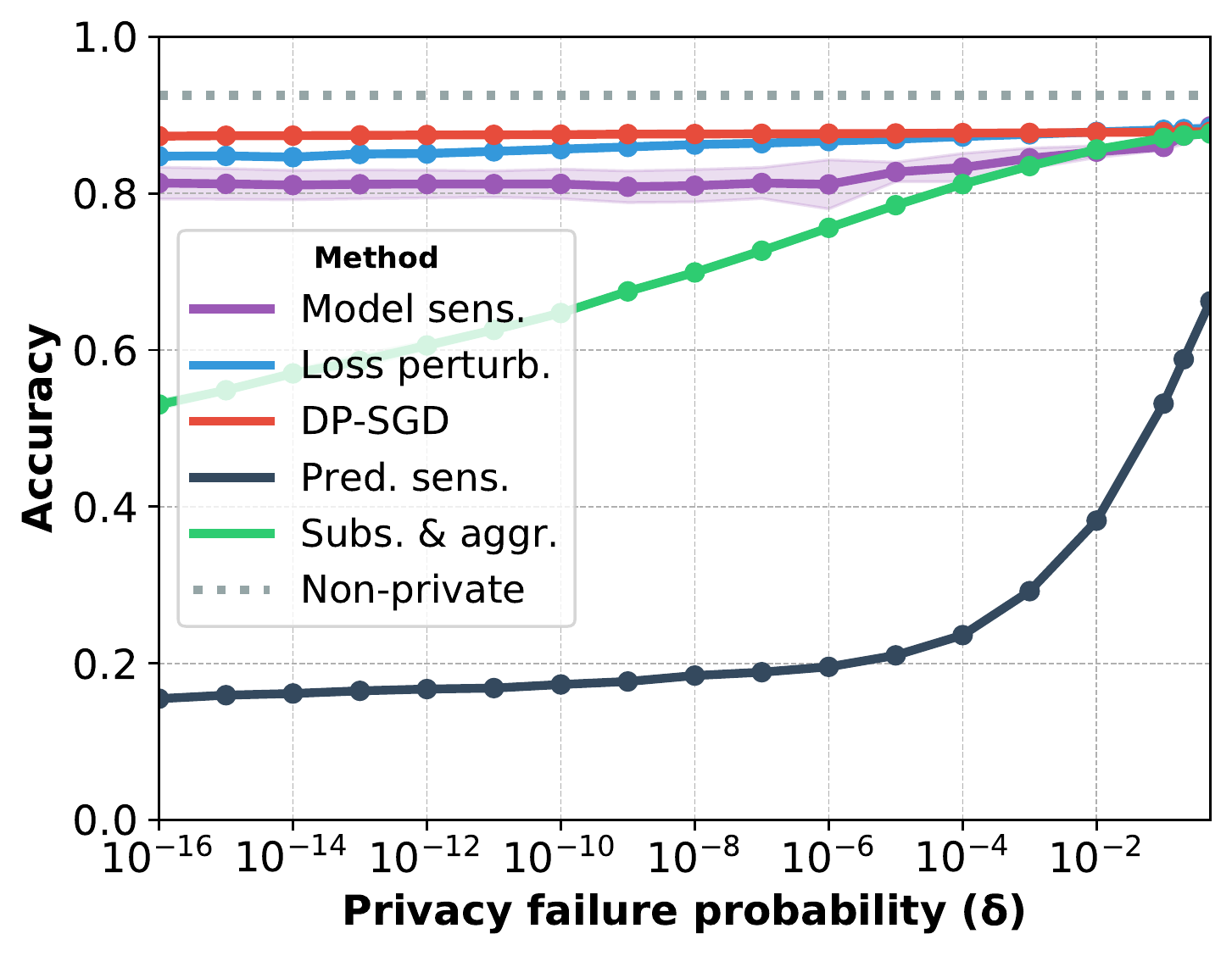}
\caption{Test accuracy on MNIST dataset as a function of privacy failure probability, $\delta$, for privacy $\epsilon=1$ and inference budget $B=100$.}
\label{fig:3}
\end{minipage}
\hspace{6mm}
\begin{minipage}{0.475\linewidth}
\centering
\includegraphics[width=\linewidth]{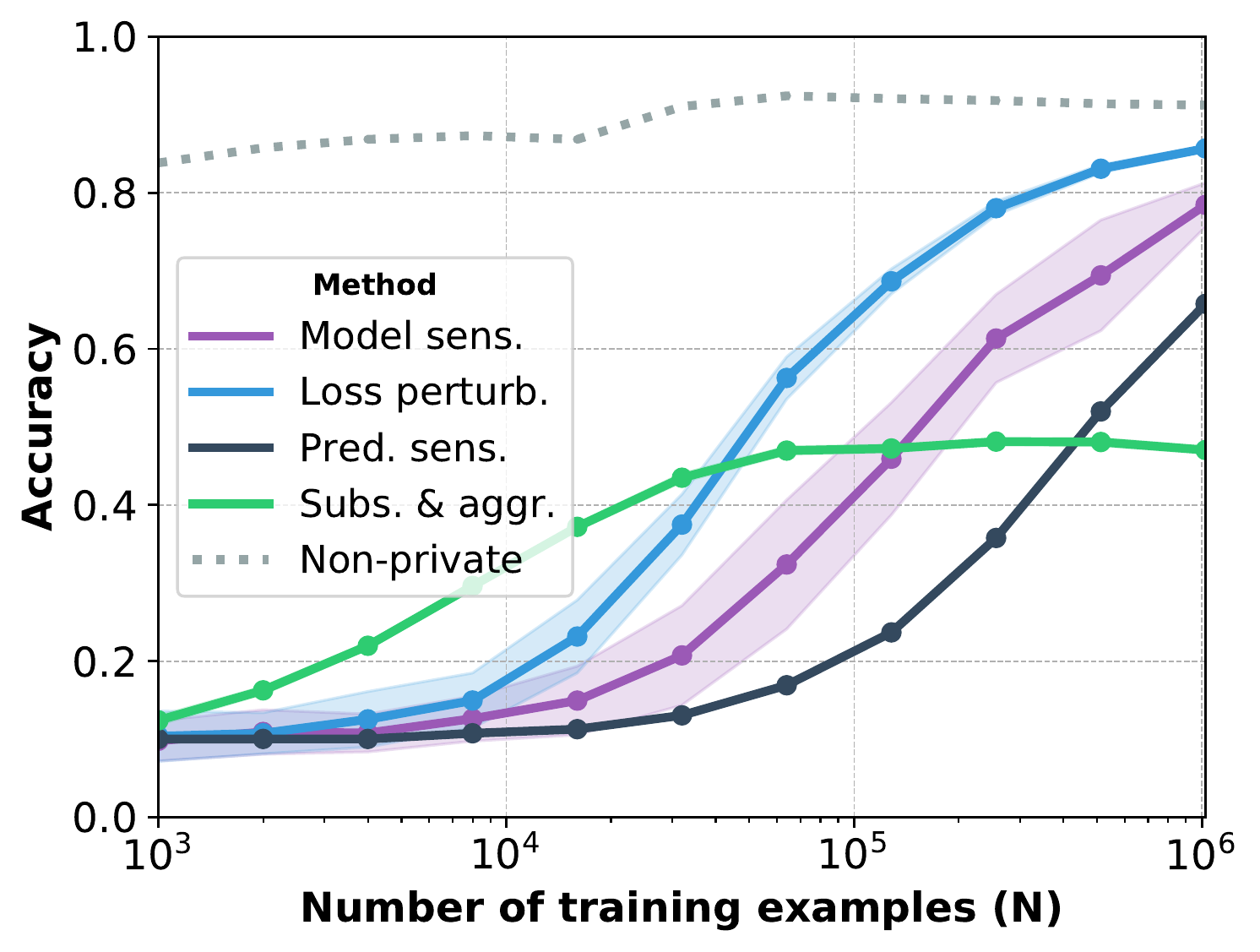}
\caption{Test accuracy on MNIST-1M dataset as a function of training set size, $N$, for privacy $(\epsilon, \delta) = (1, 0)$ and inference budget $B=100$.}
\label{fig:4}
\end{minipage}
\end{figure*}

\section{Experiments}
\label{sec:experiments}
We evaluate the five private prediction methods in experiments with linear models on the MNIST~\cite{lecun1998} dataset (\ref{sec:results_linear}) and convolutional networks on the CIFAR-10~\cite{krizhevsky2009} dataset (\ref{sec:results_convnet}).
Code reproducing our experiments is available from \url{https://github.com/facebookresearch/private_prediction}.

\subsection{Experimental Setup}
In all experiments, we choose the loss function $\ell$ to be the multi-class logistic loss.
When training linear models, we minimize the loss using L-BFGS~\cite{zhu1997} with a line search that checks the Wolfe conditions~\cite{wolfe1969} for all methods except the DP-SGD method.
We trained all models using $L_2$-regularization except when using the DP-SGD or subsample-and-aggregate methods.
(We note that these two methods have implicit regularization via gradient noise and ensembling, respectively.)
We selected regularization parameter $\lambda$ via cross-validation in a range between $10^{-5}$ and $5\cdot10^{-1}$.
We choose the clip value $\nu$ in DP-SGD by cross-validating over a range from $5 \cdot 10^{-4}$ to $5 \cdot 10^{-1}$.
For simplicity, we ignore the privacy leakage due to cross-validation and hyperparameter tuning in our analysis.
Unless noted otherwise, we perform subsample-and-aggregate with $T=256$ models.

To evaluate the quality of our models, we measure their classification accuracy on the test set.
Because private prediction methods are inherently noisy, we repeat every experiment $100$ times using the same train-test split.
We report the average accuracy corresponding standard deviation in all result plots.

\subsection{Results: Linear Models}
\label{sec:results_linear}

We first evaluate the performance of linear models on the MNIST handwritten digit dataset~\cite{lecun1998}.

\noindent{\textbf{Accuracy as a function of privacy loss.}}
Figure~\ref{fig:1} displays the accuracy of the models on the test set (higher is better) as a function of the privacy loss $\epsilon$ (lower values imply more privacy) for an inference budget of $B=100$.
The results are presented for two values of the privacy failure probability: $\delta = 0$ in Figure~\ref{fig:1a} and $\delta=10^{-5}$ in Figure~\ref{fig:1b} (DP-SGD does not support $\delta = 0$).
Recall that most methods support both $\delta = 0$ and $\delta > 0$ by choosing different noise distributions or a different privacy analysis.

Figure~\ref{fig:1} shows that the subsample-and-aggregate method outperforms the other methods for most values of $\epsilon$ in the $\delta=0$ regime, but is less competitive when $\delta=10^{-5}$ (higher probability of privacy failure).
Loss perturbation performs best for small values of $\epsilon$ (much privacy) when $\delta=0$.
Model sensitivity underperforms the other two methods when $\delta=0$ but is slightly more competitive when $\delta=10^{-5}$ due to the use of a Gaussian noise distribution.
DP-SGD outperforms all other methods when $\delta=10^{-5}$.
Prediction sensitivity performs poorly across the board.

\begin{figure*}[!t]\centering
  \subfloat[\label{fig:5a}As a function of number of dimensions, $D$.]{  \includegraphics[width=0.46\linewidth]{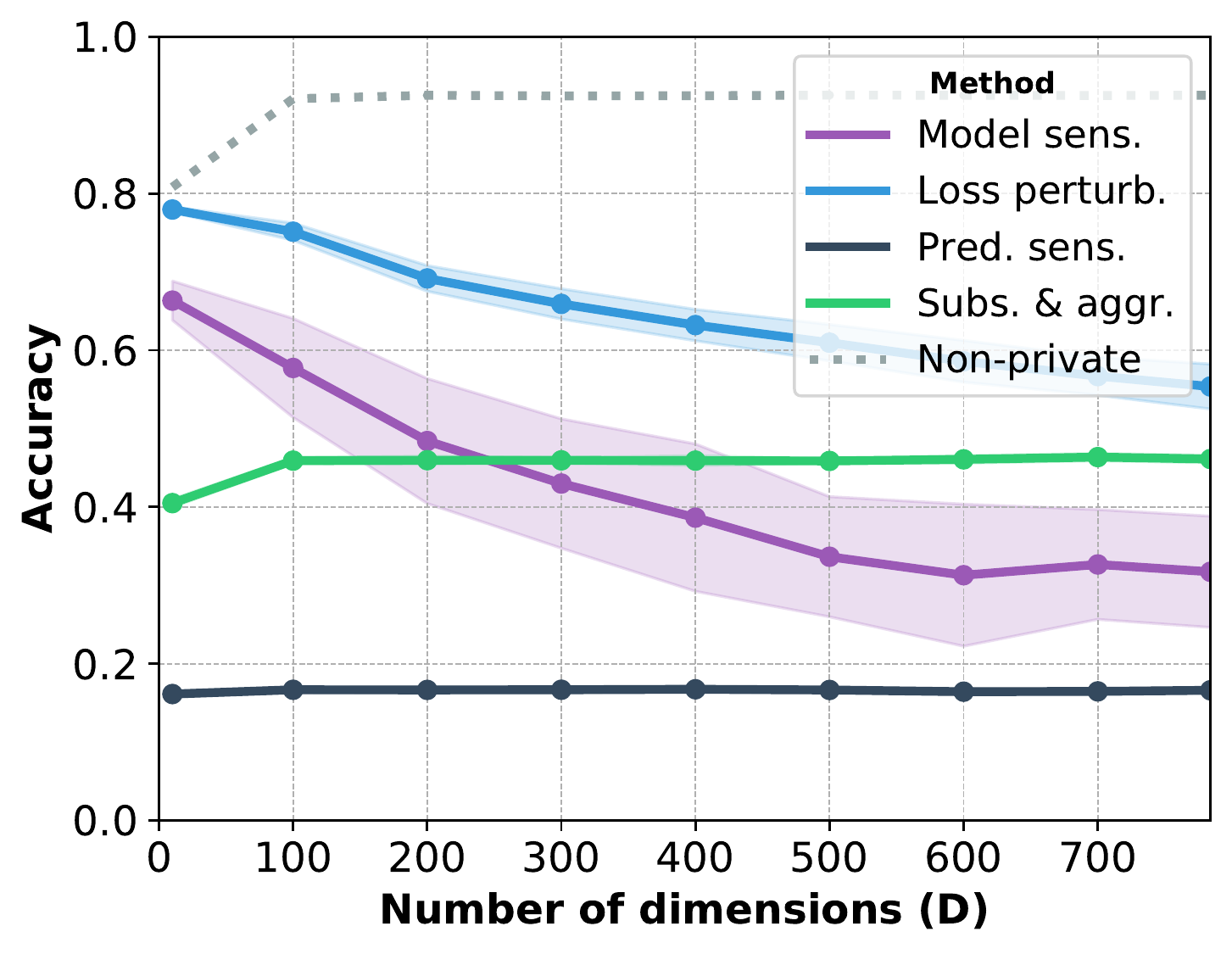}}
\hspace{6mm}
  \subfloat[\label{fig:5b}As a function of number of classes, $C$.]{  \includegraphics[width=0.46\linewidth]{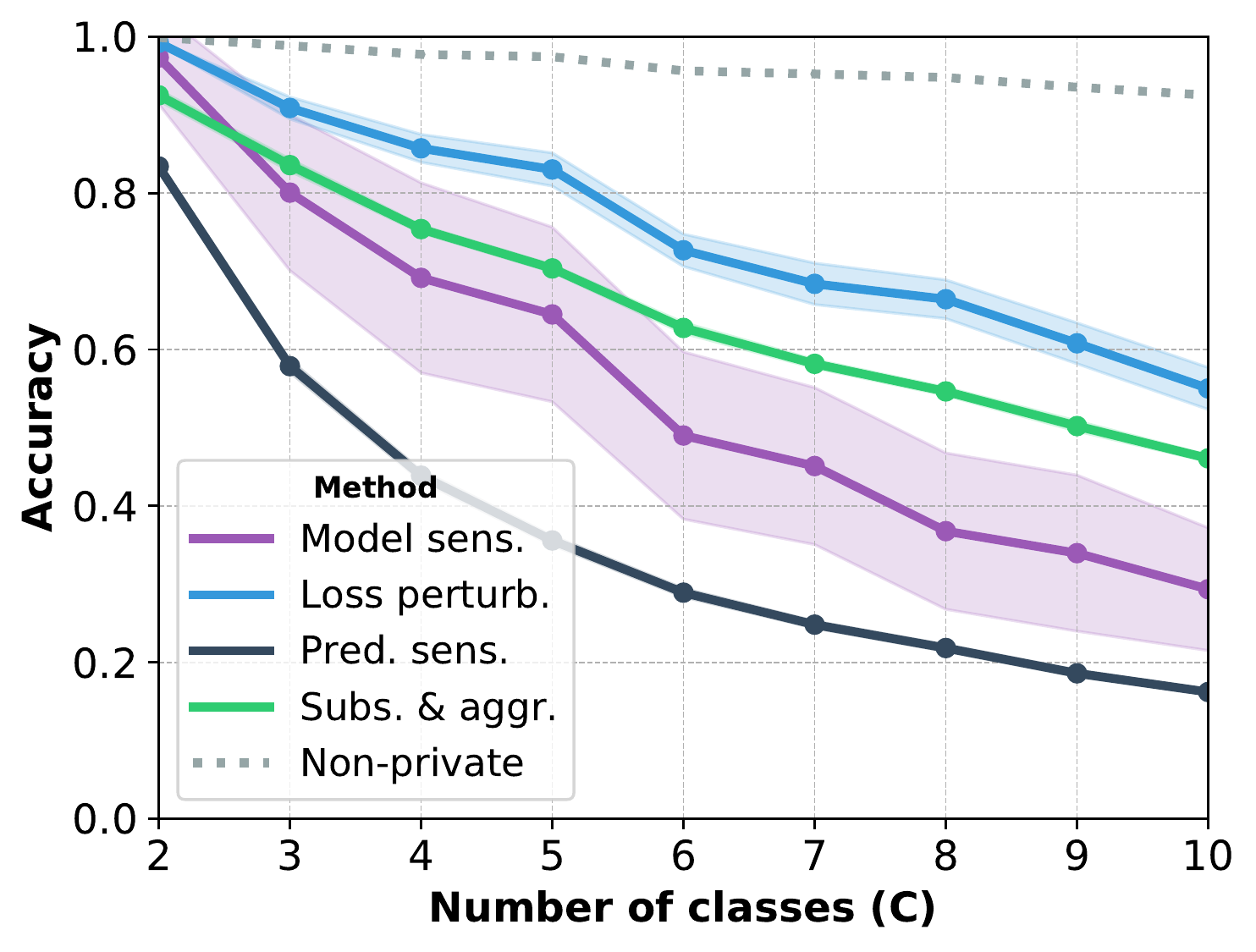}}
\caption{Test accuracy on MNIST dataset for privacy $(\epsilon, \delta) = (1, 0)$ and inference budget $B=100$.}
\label{fig:5}
\end{figure*}

\noindent{\textbf{Accuracy as a function of inference budget.}}
Figure~\ref{fig:1} shows experimental results for a particular choice of inference budget ($B=100$).
In private prediction methods, the amount of privacy provided changes when this budget changes.
To investigate this trade-off, Figure~\ref{fig:2} displays accuracy as a function of the inference budget, $B$, for a privacy loss setting of $\epsilon = 1$.
As before, we show results for privacy failure probability $\delta = 0$ (Figure~\ref{fig:2a}) and $\delta=10^{-5}$ (Figure~\ref{fig:2b}).
In the plots, private training methods are horizontal lines because the amount of information they leak does not depend on $B$.

The results show that the subsample-and-aggregate method outperforms other private prediction methods in the low-budget regime when $\delta=0$.
However, this advantage vanishes when the number of inferences that needs to be supported, $B$, grows.
Specifically, loss perturbation outperforms subsample-and-aggregate at $B \gtrapprox 80$ predictions.
The advantage also disappears when a small privacy failure probability is allowed.
For $\delta=10^{-5}$, two private training methods (loss perturbation and DP-SGD) perform at least as good as subsample-and-aggregate for all inference-budget values.
We also note that both loss perturbation and model sensitivity benefit greatly from making $\delta$ non-zero.

\noindent{\textbf{Accuracy as a function of privacy failure probability.}}
Figure~\ref{fig:3} studies the effect of varying the privacy failure probability, $\delta$, for a privacy loss of $\epsilon=1$ and inference budget $B=100$.
The results show that loss perturbation, DP-SGD, and model sensitivity work well even at very small values of $\delta$.
By contrast, prediction sensitivity appears to require unacceptably high values of $\delta$ to obtain acceptable accuracy.
The subsample-and-aggregate method benefits the most from increasing $\delta$.

\noindent{\textbf{Accuracy as a function of training set size.}}
Accuracy and privacy are also influenced by the number of training examples, $N$.
Figure~\ref{fig:4} studies this influence for privacy $(\epsilon, \delta) = (1, 0)$ and inference budget $B=100$ on a dataset of up to one million digit images that were generated using InfiMNIST~\cite{loosli2006}. 
(We do not include DP-SGD because it does not support $\delta=0$.)
The results show that subsample-and-aggregate is more suitable for settings in which few training examples are available (small $N$) because of the regularizing effect of averaging predictions over $T$ models.
However, it benefits less from increasing the training set size when $T$ remains fixed (recall that $T=256$).
When $N$ is large, loss perturbation and model sensitivity perform well because the scale of the noise that needs to be added to obtain a certain value of $\epsilon$ decreases as $N$ increases.

\noindent{\textbf{Accuracy as a function of number of dimensions.}}
Accuracy and privacy may also vary with model capacity, which in linear models can be influenced via the number of input dimensions, $D$.
Figure~\ref{fig:5a} presents the results of experiments in which we varied $D$ by applying PCA on the digit images.
The figure shows that loss perturbation and model sensitivity are more competitive when the model has low capacity to memorize examples ($D$ is small), because this implies less parameter noise is needed.

\noindent{\textbf{Accuracy as a function of number of classes.}}
Figure~\ref{fig:5b} investigates accuracy as a function of the number of classes, $C$.
In this experiment, a digit classification problem with $C' < C$ classes only considers the first $C'$ digits in the MNIST dataset (digits $0$ to $C'-1$).
Unsurprisingly, the accuracy of all methods decreases as the number of classes increases due to the increase in the problem's Bayes error.
The relative ranking of the methods appears to be largely independent of the number of classes.

\noindent{\textbf{Accuracy as a function of hyperparameters.}}
Figure~\ref{fig:6} studies the effect of: (1) the $L_2$-regularization parameter, $\lambda$, in loss perturbation and (2) the number of models, $T$, in subsample-and-aggregate.
The results show that models overfit to the perturbation when $\lambda$ is too small, but underfit when $\lambda$ is too large.
The optimal value for $\lambda$ depends on $\epsilon$.
Similarly, larger values of $T$ lead to higher accuracies but accuracy deteriorates for very large values of $T$ due to overfitting.
We note that increasing $T$ also increases computational requirements for training and inference, which grow as $O(T)$.

\begin{figure*}[!t]\centering
  \subfloat[\label{fig:6a}Loss perturbation: As a function of $\lambda$.]{  \includegraphics[width=0.45\linewidth]{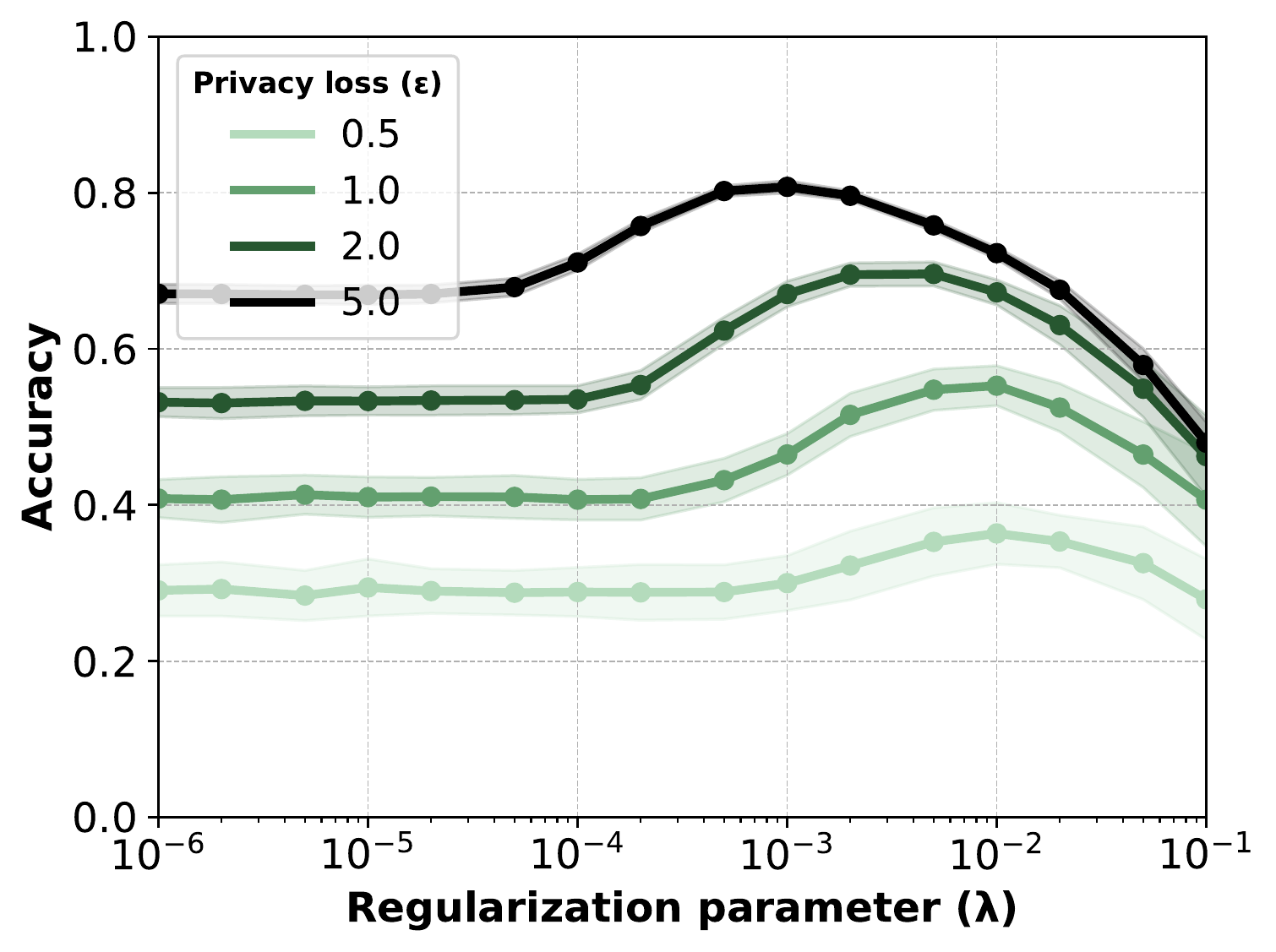}}
\hspace{6mm}
  \subfloat[\label{fig:6b}Subsample-and-aggregate: As a function of $T$.]{  \includegraphics[width=0.45\linewidth]{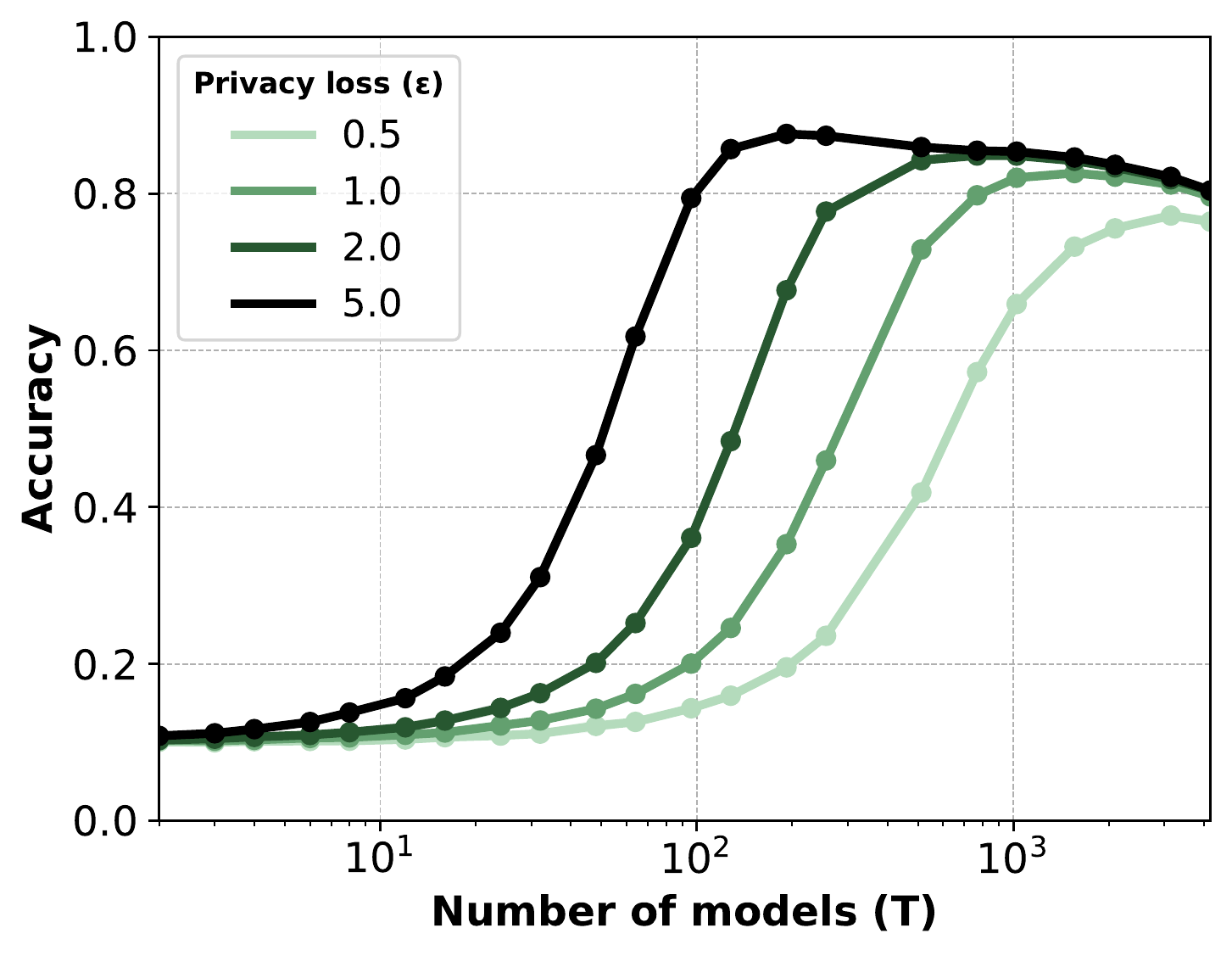}}
\caption{Test accuracy on MNIST dataset for privacy failure probability $\delta = 0$ and budget $B=100$.}
\label{fig:6}
\end{figure*}

\subsection{Results: Convolutional Networks}
\label{sec:results_convnet}
We also evaluated the DP-SGD and subsample-and-aggregate methods\footnote{Other private prediction methods cannot be used here because deep networks violate assumptions \ref{as:loss} and \ref{as:linear}.} on the CIFAR-10 dataset~\cite{krizhevsky2009} using a ResNet-20 model (with ``type A'' blocks~\cite{he2016deep}) as $\phi'(\cdot)$.
To facilitate the computation of per-example gradients in DP-SGD, we replaced batch normalization by group normalization~\cite{wu2018groupnorm}.
A non-private version of the resulting model achieves a test accuracy of $86.6\%$ on CIFAR-10.
All convolutional networks are trained using SGD with a batch size of 600, with an initial learning rate of $0.1$.
In all experiments, the learning rate was divided by $10$ thrice after equally spaced numbers of epochs.
We use standard data augmentation during training: \emph{viz.}, random horizontal flipping and random crop resizing.
The $T = 16$ models used in subsample-and-aggregate were trained for 500 epochs.
Because privacy loss increases with number of epochs in DP-SGD, we trained our DP-SGD models for only 100 epochs.
As before, we set the clip value $\nu$ in DP-SGD via cross-validation.

Figure~\ref{fig:7} shows the accuracy of the resulting models as a function of privacy loss, $\epsilon$, for inference budget $B=10$ (Figure~\ref{fig:7a}); and as a function of inference budget, $B$, for privacy loss $\epsilon=1$ (Figure~\ref{fig:7b}).
All results are for a privacy failure probability of $\delta=10^{-5}$.
The results in the figure show that the subsample-and-aggregate method outperforms DP-SGD training in some regimes: in particular, when $B$ is small and $\epsilon$ is large.
In most other situations, however, DP-SGD appears to be the best method to obtain private predictions from a ResNet-20: for privacy loss $\epsilon=1$, it surpasses subsample-and-aggregate for $B \gtrapprox 5$ predictions.
In terms of accuracy, both methods lead to substantial losses in accuracy compared to a non-private model: even at a relatively large privacy loss of $\epsilon=5$, the best private model underperforms its non-private counterpart by at least $30\%$.

\begin{table}[b]
\vspace{-1.5mm}
\captionof{table}{Effect of six hyperparameters (columns) on privacy, accuracy, and computational properties (rows): $\nearrow$ indicates that a property value goes up with the hyperparameter, $\searrow$ that it goes down, $\curvearrowright$ that it goes up and then down, and $\cdots$ that it remains unchanged. Effects for method-independent hyperparameters are in the left columns: the number of training examples $N$, the number of classes $C$, and the number of dimensions $D$. Effects for method-specific hyperparameters are in the right columns: the inference budget $B$, the number of models $T$, and the $L_2$-regularization parameter $\lambda$.}
\label{table:variation_summary}
\centering
\begin{tabular}{ l c c c c c c c c}
\toprule
  & $\mathbf{N}$ & $\mathbf{C}$ & $\mathbf{D}$ &~& $\mathbf{B}$ & $\mathbf{T}$ & $\boldsymbol{\lambda}$  \\
\midrule
\rowcolor{Gray} Privacy loss ($\epsilon$)  & $\searrow$ & $\nearrow$ & $\nearrow$ &~& $\nearrow$ & $\searrow$         & $\searrow$ \\
Privacy failure probability ($\delta$)     & $\searrow$ & $\cdots$   & $\cdots$   &~& $\nearrow$ & $\cdots$           & $\searrow$ \\
\rowcolor{Gray} Accuracy                   & $\nearrow$ & $\searrow$ & $\nearrow$ &~& $\searrow$ & $\curvearrowright$ & $\curvearrowright$ \\
Training time                              & $\nearrow$ & $\nearrow$ & $\nearrow$ &~& $\cdots$   & $\nearrow$         & $\cdots$ \\
\rowcolor{Gray} Inference time             & $\cdots$   & $\nearrow$ & $\nearrow$ &~& $\cdots$   & $\nearrow$         & $\cdots$ \\
\bottomrule
\end{tabular}
\end{table}

\section{Discussion}
\label{sec:discussion}
The results of our experiments demonstrate that there exist a range of complex trade-offs in private prediction.
We summarize these trade-offs in Table~\ref{table:variation_summary}.
The table provides an overview of how privacy, accuracy, and computation are affected by increases in variables such as the number of training examples, the inference budget, the number of input dimensions, the amount of regularization, \emph{etc.}
We hope that this overview provides practitioners some guidance on the trade-offs of private prediction.

\begin{figure*}[!t]\centering
  \subfloat[\label{fig:7a}As a function of privacy loss, $\epsilon$, for $B=10$.]{  \includegraphics[width=0.45\linewidth]{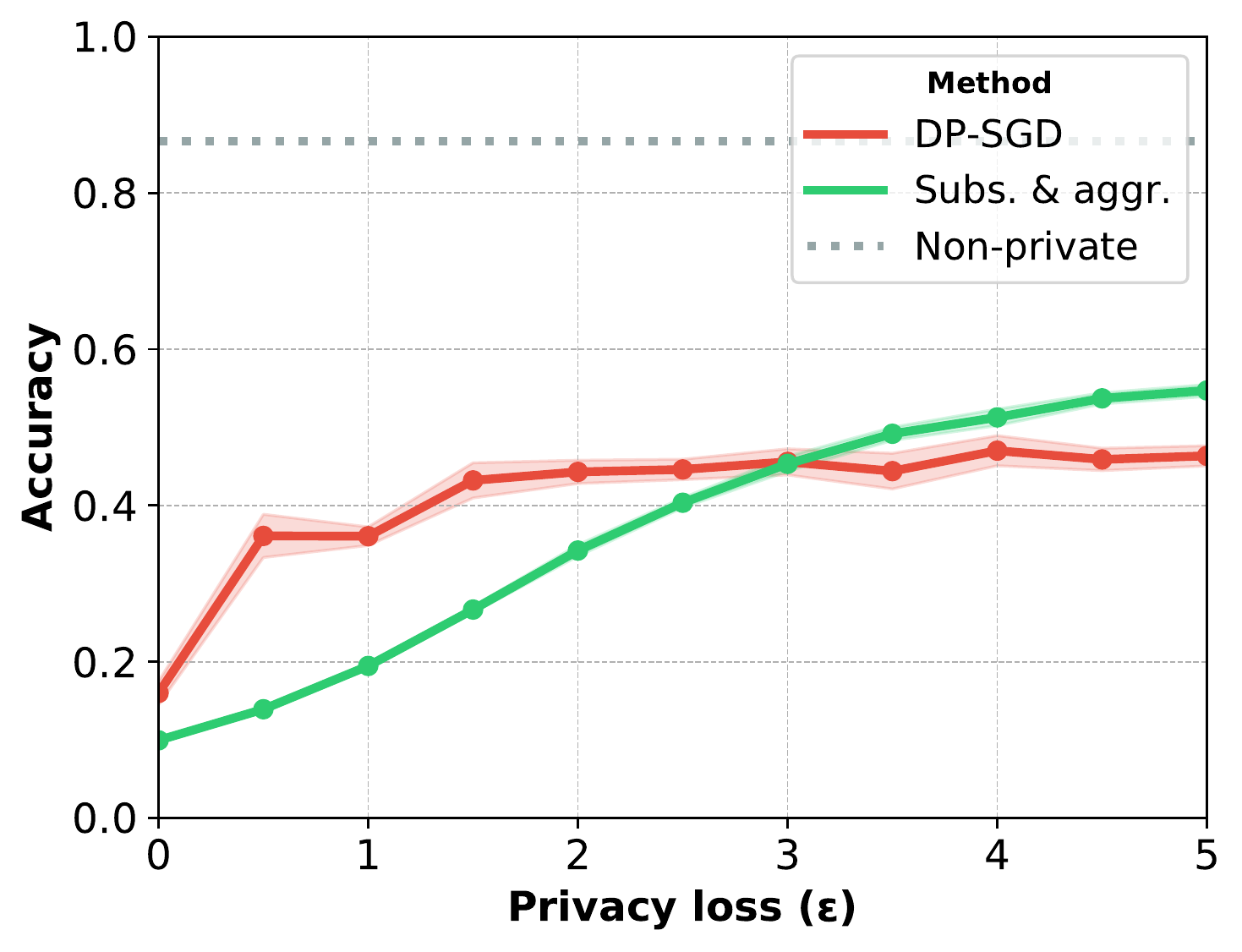}}
\hspace{6mm}
  \subfloat[\label{fig:7b}As a function of inference budget, $B$, for $\epsilon=1$.]{  \includegraphics[width=0.46\linewidth]{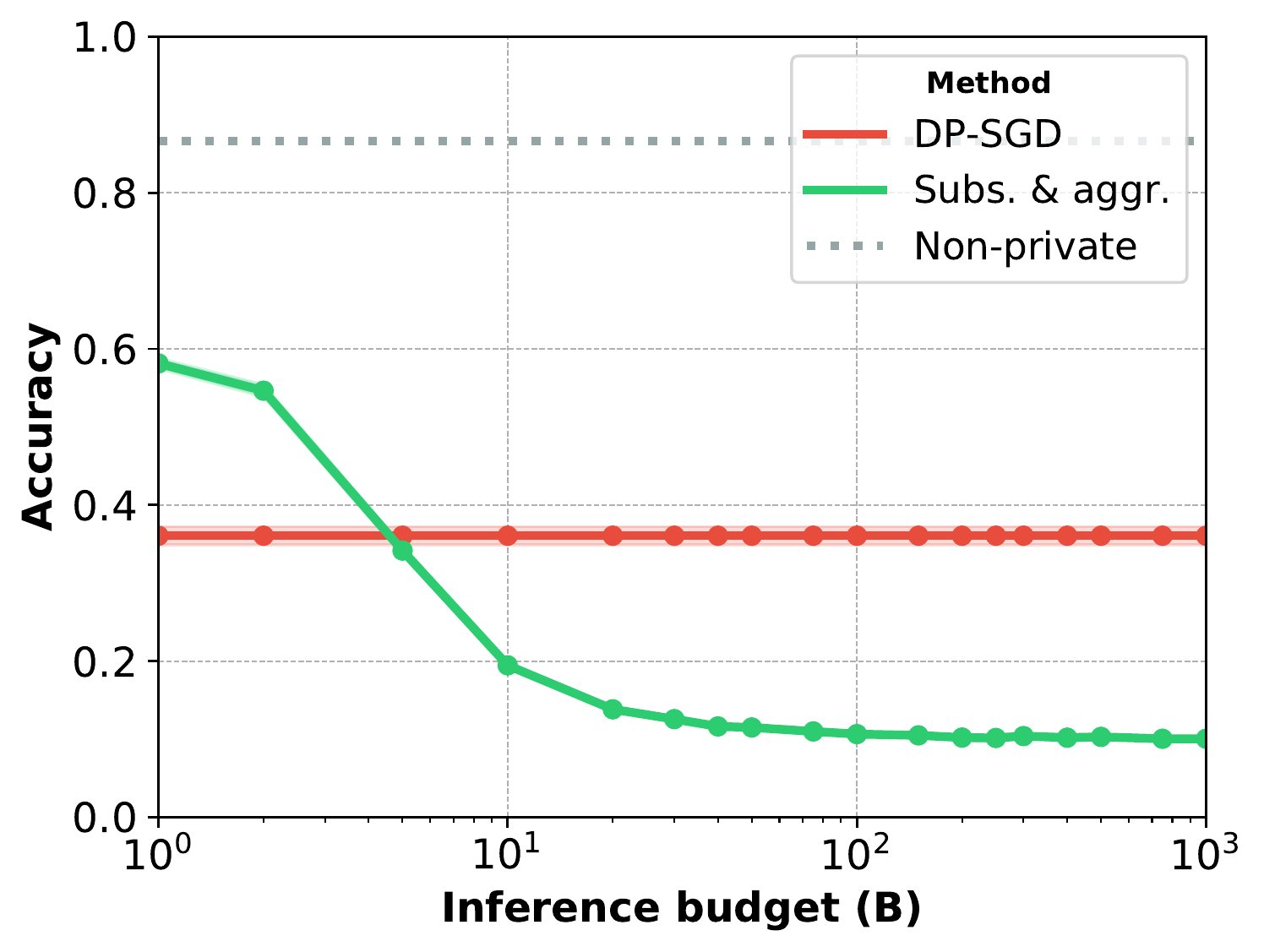}}
\caption{Test accuracy on CIFAR-10 dataset of ResNet-20 with group normalization for $\delta=10^{-5}$.}
\label{fig:7}
\end{figure*}

A perhaps surprising result of our study is that private training methods are often a better alternative for private prediction than private prediction methods.
Methods such as subsample-and-aggregate appear to be most competitive for small inference budgets (tens or hundreds of test examples) and for small training sets, but tend to be surpassed by private training methods otherwise.
We surmise that the main reason for this observation is that inference-budget accounting is fairly naive: the privacy guarantees assume that privacy loss grows linearly ($\delta=0$) or with the square root ($\delta > 0$) of the number of inferences, which may be overly pessimistic.
Inference-budget accounting may be improved, \emph{e.g.}, using accounting mechanisms akin to those in the PATE family of algorithms~\cite{papernot2016,papernot2018}.

Another way in which private prediction methods may be improved is by making stronger assumptions about the models with which they are used.
In particular, the subsample-and-aggregate method is the only method that treats models as a pure black box.
This is an advantage in the sense that it provides flexibility compared to private training methods that support only a small family of models: for example, subsample-and-aggregate could also be used to make private predictions for decision trees.
But the flexibility of subsample-and-aggregate is also a weakness because it may cause the privacy guarantees to be too pessimistic.
Indeed, it may be possible to improve private prediction methods for specific models by making additional assumptions about those models (linearity, smoothness, \emph{etc.}) or by adapting the models to make them more suitable for private training or prediction~\cite{papernot2019}.

\section*{Broader Impact}
To date, there are a large number of machine-learning systems that are trained on data that needs to remain private or confidential but that expose predictions to the outside world. 
Examples of such systems include cloud services, content-ranking systems, and recommendation services. 
In controlled settings, experiments have shown that it may be possible to extract private information from such predictions. 
Yet, the machine-learning community currently has limited understanding of how much private information can be leaked from such services and how this information leakage can be limited. 
Over the past decade, a range of methods has been developed that aim to limit information leakage but these methods have primarily been studied theoretically.
Indeed, little is known about their empirical performance in terms of the inevitable privacy-accuracy trade-off. 
This work helps to improve that understanding.
As such, we believe this study helps anyone whose data is used to train modern machine-learning systems.
In particular, our study can help the developers of those systems to better understand the privacy risks in their systems and may also help them find better operating points on the privacy-accuracy spectrum.

\section*{Acknowledgements}
We thank Ilya Mironov, Davide Testuginne, Mark Tygert, Mike Rabbat, and Aaron Roth for helpful feedback on early versions of this paper.

\bibliographystyle{abbrv}

\newpage
\appendix

\section{Preliminaries}

As a reminder, we denote by $\phi(\bx; \theta)$ a machine learning model with parameters $\theta$ that given a $D$-dimensional input vector, $\bx \in \mathbb{R}^D$, produces a probability vector over $C$ classes $\by \in \Delta^C$, where $\Delta^C$ represents the $(C\!-\!1)$-dimensional probability simplex.
The parameters $\theta$ are obtained by fitting the model on a training set of $N$ labeled examples, $\calD = \{(\bx_1, \by_1), \dots, (\bx_N, \by_N) \}$.

In the following we aim to bound the privacy loss on the training set $\calD$ given a budget of $B$ queries, $\mathcal{Q} =\{\hat{\bx}_1, \dots, \hat{\bx}_B\}$, to the model $\phi(\cdot)$.

We adopt a regularized empirical risk minimization framework in which we minimize:
\begin{equation}
    \label{appxeq:erm}
    J(\theta; \calD) = \frac{1}{N} \sum_{n=1}^N \ell(\phi'(\bx_n; \theta), \by_n)  + \lambda R(\theta).
\end{equation}
with respect to $\theta$, where $\ell(\cdot)$ is a loss function, $R(\cdot)$ is a regularizer, and $\lambda \geq 0$ is a regularization parameter.
Some of the methods make one or more of the following assumptions:

\begin{enumerate}[leftmargin=*]
\setlength\itemsep{0em}
    \item The loss function $\ell(\cdot)$ is strictly convex, continuous, and
        differentiable everywhere.\label{appxas:loss}
    \item The regularizer $R(\theta)$ is $1$-strongly convex, continuous, and
        differentiable everywhere w.r.t. $\theta$. \label{appxas:regularizer}
    \item The model $\phi'(\cdot)$ is linear, \emph{i.e.}, $\phi'(\bx; \theta) = \theta^\top \bx$. \label{appxas:linear}
    \item The loss function $\ell(\cdot)$ is Lipschitz with a constant $K$, \emph{i.e.}, $\|
        \nabla l\|_2 \le K$. \label{appxas:lip}
      \item The inputs are contained in the unit $L_2$ ball, \emph{i.e.} $\|\bx\|_2 \le 1$
        for all $\bx$. \label{appxas:l2inp}
\end{enumerate}

In the following, we denote by $\|\bA\|_F$ the Frobenius norm of the matrix $\bA$, \emph{i.e.}:
\begin{equation}
    \|\bA\|_F = \left(\sum_{i,j} \bA_{ij}^2\right)^{\nicefrac{1}{2}}.
\end{equation}

We use Lemma~\ref{lem:sensitivity_bound} in the privacy proofs for the model
and prediction sensitivity methods.

\begin{lemma}
  \label{lem:sensitivity_bound}
  Given assumptions \ref{appxas:loss}, \ref{appxas:regularizer}, \ref{appxas:linear},
  \ref{appxas:lip}, and \ref{appxas:l2inp}, and letting $\theta_1\!=\!\argmin_{\theta}
  J(\theta; \calD)$ and $\theta_2\!=\!\argmin_{\theta} J(\theta; \calD')$, the
  sensitivity is bounded as $\|\theta_1 - \theta_2 \|_F \le
  \frac{2K}{N\lambda}$.
\end{lemma}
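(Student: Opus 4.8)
The plan is to reproduce the strong-convexity stability argument of Chaudhuri et al.~\cite{chaudhuri2011}, adapted to the matrix-valued multi-class parameterization $\theta \in \mathbb{R}^{D \times C}$. Since $\calD$ and $\calD'$ differ in only one example, relabel so that they agree on the first $N-1$ pairs and differ in the $N$-th: $(\bx_N, \by_N)$ in $\calD$ versus $(\bx_N', \by_N')$ in $\calD'$. Set $G(\theta) = J(\theta; \calD)$ and $H(\theta) = J(\theta; \calD')$, so that $\theta_1 = \argmin_\theta G(\theta)$ and $\theta_2 = \argmin_\theta H(\theta)$; by assumptions~\ref{appxas:loss} and~\ref{appxas:regularizer} both functions are continuous and (at least) $\lambda$-strongly convex, since $\lambda R$ is $\lambda$-strongly convex and the data terms are convex, so the minimizers are unique and well defined.

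First I would bound the gradient discrepancy between the two objectives. Because $\calD$ and $\calD'$ agree on $N-1$ examples, $G(\theta) - H(\theta) = \tfrac{1}{N}\big(\ell(\theta^\top \bx_N, \by_N) - \ell(\theta^\top \bx_N', \by_N')\big)$, hence $\nabla G(\theta) - \nabla H(\theta) = \tfrac{1}{N}\big(\nabla_\theta \ell(\theta^\top \bx_N, \by_N) - \nabla_\theta \ell(\theta^\top \bx_N', \by_N')\big)$ for every $\theta$. Under the linearity assumption~\ref{appxas:linear}, the chain rule gives $\nabla_\theta \ell(\theta^\top \bx, \by) = \bx\,\big(\nabla_{\bl}\ell\big)^\top$, a rank-one $D \times C$ matrix whose Frobenius norm equals $\|\bx\|_2\,\|\nabla_{\bl}\ell\|_2 \le K$ by assumptions~\ref{appxas:lip} and~\ref{appxas:l2inp}. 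The triangle inequality then yields $\|\nabla G(\theta) - \nabla H(\theta)\|_F \le \tfrac{2K}{N}$ for all $\theta$.

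Next I would convert $\lambda$-strong convexity of $G$ into the desired bound on $\|\theta_1 - \theta_2\|_F$, using the Frobenius (trace) inner product $\langle \bA, \bB\rangle = \mathrm{tr}(\bA^\top \bB)$. Strong convexity gives $\langle \nabla G(\theta_1) - \nabla G(\theta_2),\, \theta_1 - \theta_2\rangle \ge \lambda \|\theta_1 - \theta_2\|_F^2$. The first-order optimality conditions $\nabla G(\theta_1) = \mathbf{0}$ and $\nabla H(\theta_2) = \mathbf{0}$ let me rewrite $\nabla G(\theta_2) = \nabla G(\theta_2) - \nabla H(\theta_2)$, so that $\lambda \|\theta_1 - \theta_2\|_F^2 \le \langle -\nabla G(\theta_2),\, \theta_1 - \theta_2\rangle \le \|\nabla G(\theta_2) - \nabla H(\theta_2)\|_F\,\|\theta_1 - \theta_2\|_F \le \tfrac{2K}{N}\,\|\theta_1 - \theta_2\|_F$ by Cauchy--Schwarz and the previous step. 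If $\theta_1 = \theta_2$ the claim is trivial; otherwise dividing through by $\lambda\|\theta_1 - \theta_2\|_F$ gives $\|\theta_1 - \theta_2\|_F \le \tfrac{2K}{N\lambda}$, as claimed.

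I do not anticipate a genuine obstacle here: the argument is a direct transcription of the standard vector-valued proof. The only care-points are (i) justifying uniqueness of the minimizers from the strict/strong convexity and continuity assumptions, and (ii) getting the matrix chain-rule computation and the Frobenius-norm bookkeeping right, since the source result is stated for $\theta \in \mathbb{R}^D$ whereas here $\theta \in \mathbb{R}^{D\times C}$; in particular one must confirm that the Lipschitz constant $K$ from assumption~\ref{appxas:lip} is the bound on the gradient of $\ell$ with respect to its logit argument, which is the quantity that appears after the chain rule.
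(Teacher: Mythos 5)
Your proposal is correct and takes essentially the same route as the paper: bound the Frobenius norm of the gradient of the difference between the two objectives by $\tfrac{2K}{N}$ (linearity, rank-one structure, the Lipschitz bound, and the unit $L_2$ ball), then convert this into $\|\theta_1-\theta_2\|_F \le \tfrac{2K}{N\lambda}$ via $\lambda$-strong convexity. The only difference is that you prove the strong-convexity stability step inline (first-order optimality plus Cauchy--Schwarz), whereas the paper invokes it as Lemma 7 of~\cite{chaudhuri2011}; the content is identical.
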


\begin{proof}
    To bound the sensitivity, let $G(\theta) = J(\theta; \calD)$ and $g(\theta) = J(\theta; \calD') -
    J(\theta; \calD)$. Note that $\theta_1 = \argmin_\theta G(\theta)$ and $\theta_2 = \argmin_\theta
    G(\theta) + g(\theta)$. Given Assumptions~\ref{appxas:loss} and~\ref{appxas:regularizer},
    we can apply Lemma 7 of~\cite{chaudhuri2011} to get:
    \begin{equation}
        \label{eq:lemma7}
        \|\theta_1 - \theta_2\|_F \le \frac{1}{\lambda} \max_\theta \|\nabla g(\theta)\|_F.
    \end{equation}

    Next we bound $\|\nabla g(\theta)\|_F$. Let $\bx,\by$ and $\bx',\by'$ be the
    differing examples in $\calD$ and $\calD'$. Using assumption
    ~\ref{appxas:linear}, we have:
    \begin{equation}
        g(\theta) = \frac{1}{N} (\ell(\theta^\top \bx, \by) - \ell(\theta^\top \bx', \by')).
    \end{equation}
    Thus:
    \begin{equation}
        \nabla g(\theta) = \frac{1}{N} (\nabla \ell(\theta^\top \bx, \by) \bx^\top - \nabla \ell(\theta^\top \bx', \by') \bx'^\top),
    \end{equation}
    and:
    \begin{align*}
        &\| \nabla g(\theta)\|_F \\
        &= \frac{1}{N} \left\|\nabla \ell(\theta^\top \bx, \by) \bx^\top - \nabla \ell(\theta^\top \bx', \by') \bx'^\top \right\|_F  \\
        &\le \frac{1}{N} \left( \left\|\nabla \ell(\theta^\top \bx, \by) \bx^\top \right\|_F + \left\| \nabla \ell(\theta^\top \bx', \by') \bx'^\top \right\|_F \right) \\
        &\le \frac{1}{N} \left( \left\|\nabla \ell(\theta^\top \bx, \by) \right\|_2 \left\|\bx \right\|_2 + \left\| \nabla \ell(\theta^\top \bx', \by') \right\|_2 \left\|\bx' \right\|_2 \right)\\
        &\le \frac{1}{N} K \left(\left\| \bx \right\|_2 + \left\| \bx' \right\|_2 \right) \\
        &\le \frac{2K}{N}.
    \end{align*}
    We use the triangle inequality in the third step, the fact that
    $\|{\bf u}{\bf v}^\top\|_F = \|{\bf u}\|_2 \|{\bf v}\|_2$ in the fourth step,
    and assumptions~\ref{appxas:lip} and~\ref{appxas:l2inp} in steps five and six
    respectively. Thus we have:
    \begin{equation}
        \label{eq:gradbound}
        \| \nabla g(\theta)\|_F \le \frac{2K}{N}.
    \end{equation}
    Combining Equations~\ref{eq:lemma7} and \ref{eq:gradbound} yields:
    \begin{equation}
      \label{eq:sensitivity_bound}
      \|\theta_1 - \theta_2 \|_F \le \frac{2K}{N\lambda},
    \end{equation}
    completing the proof.
\end{proof}

For the perturbations used by the sensitivity methods, we sample noise from the
distribution:
\begin{equation}
    \label{eq:sqrt_noise}
    p(\bB) = \frac{1}{z} e^{-\beta \|\bB\|_F},
\end{equation}
where $z$ is a normalizing constant. When $\delta > 0$ we use a zero-mean
isotropic Gaussian distribution with a standard deviation of $\sigma$:
\begin{equation}
    \label{eq:gaussian_noise}
    p(\bB) = \frac{1}{z} e^{-\frac{1}{2\sigma^2} \|\bB\|^2_F},
\end{equation}
where $z$ is a normalizing constant.

\section{Private Training}

\subsection{Model Sensitivity}

We generalize the sensitivity method of~\cite{chaudhuri2011} to the setting of
multi-class classification. The multi-class model sensitivity method achieves
$\epsilon$-differential privacy with respect to the dataset $\calD$ when
releasing the model parameters $\theta$. In this case, the budget of test
examples is infinite since the model itself is differentially private. The
model sensitivity method is given in Algorithm~\ref{alg:model_perturb}.
\begin{algorithm}
    \caption{Multi-class model sensitivity.}
    \label{alg:model_perturb}
\begin{algorithmic}
    \STATE {\bf Inputs:} Privacy parameter $\epsilon$, regularization parameter $\lambda$, and dataset $\calD$.
    \STATE {\bf Output:} $\theta_{\textrm{priv}}$, the $\epsilon$-differentially private classifier.
    \STATE Sample $\bB$ from the distribution in Equation~\ref{eq:sqrt_noise} with $\beta = \frac{N\lambda\epsilon}{2K}$.
    \STATE Compute $\theta_{\textrm{priv}} = \argmin_\theta J(\theta; \calD) + \bB$.
\end{algorithmic}
\end{algorithm}

\begin{appendix_theorem}
  \label{apxthm:model_perturb}
  Given assumptions \ref{appxas:loss}, \ref{appxas:regularizer}, \ref{appxas:linear},
  \ref{appxas:lip}, and \ref{appxas:l2inp}, the model sensitivity method in
  Algorithm~\ref{alg:model_perturb} is $\epsilon$-differentially private.
\end{appendix_theorem}

\begin{proof}
    We bound the privacy loss in terms of the sensitivity of the minimizer of
    $J(\cdot)$ (Equation~\ref{appxeq:erm}) in the Frobenius norm and then apply
    Lemma~\ref{lem:sensitivity_bound}. We denote by $\calA$ the application of
    differential privacy mechanism, \emph{i.e.} a run of
    Algorithm~\ref{alg:model_perturb}.

    For all datasets $\calD$ and $\calD'$ which differ by one example, we have:
    \begin{align*}
        &\log \frac{p(\calA(\calD) = \theta)}{p(\calA(\calD') = \theta)} \\
        &=\log \frac{e^{-\beta \|\theta_1 + \bB \|_F}}{e^{-\beta \|\theta_2 + \bB\|_F}} \\
        &= \beta \left(\|\theta_2 + \bB\|_F - \|\theta_1 + \bB \|_F \right)  \\
        &\le \beta \|\theta_1 - \theta_2 \|_F,
    \end{align*}
    where we use the triangle inequality, and the fact that:
    $\theta_1 = \argmin_\theta J(\theta; \calD)$ and $\theta_2 = \argmin_\theta J(\theta; \calD')$. Thus:
    \begin{equation}
        \log \frac{p(\calA(\calD) = \theta)}{p(\calA(\calD') = \theta)} \le
        \beta \|\theta_1 - \theta_2 \|_F. \label{eq:privloss}
    \end{equation}

    Combining Equation~\ref{eq:privloss} with Lemma~\ref{lem:sensitivity_bound} yields:
    \begin{equation}
        \log \frac{p(\calA(\calD) = \theta)}{p(\calA(\calD') = \theta)} \le \beta
        \|\theta_1 - \theta_2\|_F \le \frac{\beta 2 K}{N \lambda}.
    \end{equation}
    Thus if we choose $\beta = \frac{N \lambda \epsilon}{2 K}$, we achieve
    $\epsilon$-differential privacy.
\end{proof}

The model sensitivity method in Algorithm~\ref{alg:model_perturb} is
$(\epsilon, \delta)$-differentially private with $\delta = 0$. By letting
$\delta > 0$, we can use noise from a Gaussian distribution.

\begin{algorithm}
  \caption{Analytic Gaussian mechanism~\cite{balle2018improving}.}
    \label{alg:analytic_gaussian}
\begin{algorithmic}
  \STATE {\bf Inputs:} Privacy parameters $\epsilon$ and $\delta$.
  \STATE {\bf Output:} Scalar $\alpha$ used to compute the variance of the Guassian noise distribution.
  \STATE Let $\delta_0 = \Phi(0) - e^\epsilon \Phi(-\sqrt{2\epsilon})$.
  \IF {$\delta \geq \delta_0$}
    \STATE Define $B_\epsilon^+(v) = \Phi(\sqrt{\epsilon v}) - e^\epsilon \Phi(-\sqrt{\epsilon(v + 2)})$.
    \STATE Find $v^* = \sup\{v \in \mathbb{R}_{\geq 0}: B_\epsilon^+(v) \leq \delta \}$.
    \STATE Let $\alpha = \sqrt{1 + \nicefrac{v^*}{2}} - \sqrt{\nicefrac{v^*}{2}}$.
  \ELSE
    \STATE Define $B_\epsilon^-(u) = \Phi(-\sqrt{\epsilon u}) - e^\epsilon \Phi(-\sqrt{\epsilon(u + 2)})$.
    \STATE Find $u^* = \inf\{u \in \mathbb{R}_{\geq 0}: B_\epsilon^-(u) \leq \delta \}$.
  \ENDIF
\end{algorithmic}
\end{algorithm}

We use the analytic Gaussian mechanism of~\cite{balle2018improving}, which
results in better privacy parameters at the same noise scale than the
standard Gaussian mechanism~\cite{dwork2014algorithmic}. We rely on
Algorithm~\ref{alg:analytic_gaussian}, a subroutine of Algorithm 1
from~\cite{balle2018improving}, in order to compute the $\alpha$ required by
the analytic Gaussian mechanism. Following~\cite{balle2018improving}, we note
that $B_\epsilon^+(v)$ and $B_\epsilon^-(u)$ are monotonic functions and use
binary search to find the value of $v^*$ and $u^*$ in
Algorithm~\ref{alg:analytic_gaussian} up to arbitrary precision.

\begin{algorithm}
    \caption{Multi-class Gaussian model sensitivity.}
    \label{alg:gaussian_model_perturb}
\begin{algorithmic}
    \STATE {\bf Inputs:} Privacy parameters $\epsilon$ and $\delta$, regularization parameter $\lambda$, and dataset $\calD$.
    \STATE {\bf Output:} $\theta_{\textrm{priv}}$, the $(\epsilon, \delta)$-differentially private classifier.
    \STATE Compute $\alpha$ using Algorithm~\ref{alg:analytic_gaussian}.
    \STATE Sample $\bB$ from the distribution in Equation~\ref{eq:gaussian_noise} with $\sigma=\frac{2K \alpha}{N \lambda \sqrt{2\epsilon}}$.
    \STATE Compute $\theta_{\textrm{priv}} = \argmin_\theta J(\theta; \calD) + \bB$.
\end{algorithmic}
\end{algorithm}

\begin{appendix_theorem}
  \label{apxthm:gaussian_model_perturb}
  Given assumptions \ref{appxas:loss}, \ref{appxas:regularizer}, \ref{appxas:linear},
  \ref{appxas:lip}, and \ref{appxas:l2inp}, the Gaussian model sensitivity method in
  Algorithm~\ref{alg:gaussian_model_perturb} is $(\epsilon,
  \delta)$-differentially private for $\delta \in (0,1)$.
\end{appendix_theorem}

\begin{proof}
  From Lemma~\ref{lem:sensitivity_bound}, we have a bound on the sensitivity
  of $\theta(\calD) = \argmin_{\theta'} J(\theta'; \calD)$:
  \begin{equation}
    \label{eq:gaussian_model_sensitivity}
    \max_{\calD, \calD'}  \| \theta(\calD)  - \theta(\calD') \|_F \le \frac{2K}{N\lambda}.
  \end{equation}

  We apply Theorem 8 of~\cite{balle2018improving} which states that for any $\epsilon \geq 0$ and $\delta \in (0,1)$,
  the Gaussian mechanism with standard deviation $\sigma$ provides $(\epsilon, \delta)$-differential privacy
  if and only if:
  \begin{equation}
  \Phi\left( \frac{\Delta_2 f}{2\sigma} - \frac{\epsilon \sigma}{\Delta_2 f} \right) - e^{\epsilon} \Phi \left(-\frac{\Delta_2 f}{2\sigma} - \frac{\epsilon\sigma}{\Delta_2 f} \right) \leq \delta,
  \end{equation}
  where $\Delta_2 f$ is the $L_2$ sensitivity of the function $f(\calD)$, and $\Phi(\cdot)$ is the cumulative
  distribution function (CDF) of the standard univariate Gaussian distribution.

  Combining the upper bound on $\Delta_2 f$ from Equation~\ref{eq:gaussian_model_sensitivity} with Theorem 9
  of~\cite{balle2018improving} shows that the Gaussian model sensitivity
  method in Algorithm~\ref{alg:gaussian_model_perturb} is $(\epsilon, \delta)$-differentially private.
\end{proof}

\newcommand{\bJ}{\mathbf{J}}
\newcommand{\lmax}[1]{\lambda_{\max}(#1)}
\newcommand{\priv}[1]{#1_\textnormal{priv}}
\newcommand{\vect}[1]{\textnormal{vec}(#1)}

\subsection{Loss Perturbation}
We generalize the loss perturbation method of~\cite{chaudhuri2011} to the
multi-class setting. We use the slighly modified mobjective
of~\cite{kifer2012}:
\begin{equation}
    \label{eq:erm_loss_perturb}
    J(\theta; \calD) = \frac{1}{N} \sum_{n=1}^N \ell(\phi'(\bx_n; \theta), \by_n)  + \frac{\lambda}{N} R(\theta),
\end{equation}
where we assume that both $\ell(\cdot)$ and $R(\cdot)$ are convex with
continuous Hessians and that $\phi'(\cdot)$ is linear. The multi-class loss
perturbation method is given in Algorithm~\ref{alg:loss_perturb} where the
function $\text{tr}(\bA)$ is the trace of a square matrix $\bA$.

\begin{algorithm}
    \caption{Multi-class loss perturbation.}
    \label{alg:loss_perturb}
\begin{algorithmic}
    \STATE {\bf Inputs:} Privacy parameter $\epsilon$ and dataset $\calD$.
    \STATE {\bf Output:} $\theta_{\textrm{priv}}$, the $\epsilon$-differentially private classifier.
    \STATE Set $\rho \ge \frac{2LC}{\epsilon}$.
    \STATE Sample $\bB$ from the distribution in Equation~\ref{eq:sqrt_noise}
    with $\beta = \frac{\epsilon}{2K}$.
    \vspace{0.5mm}
    \STATE Compute $\theta_{\textrm{priv}} = \argmin_\theta J(\theta; \calD) + \frac{1}{N}\text{tr}(\bB^\top \theta) + \frac{\rho}{2N} \|\theta\|_F^2$.
\end{algorithmic}
\end{algorithm}

We require additional notation to state and prove the privacy theorem of
Algorithm~\ref{alg:loss_perturb}. First, we let $\vect{\bA}$ denote the
vectorization of the matrix $\bA$ constructed by stacking the rows of $\bA$
into a vector. If $\bA \in \mathbb{R}^{n \times m}$ then $\vect{\bA} \in
\mathbb{R}^{nm}$. Second, we rely on the standard Kronecker product which we
denote by the symbol $\otimes$. For two vectors we have $\bu \otimes \bv =
\vect{\bu \bv^\top}$. We let $\lmax{\bA}$ denote the maximum eigenvalue of a
symmetric matrix $\bA$.

\begin{appendix_theorem}
  \label{apxthm:loss_perturb}
  Given a convex loss $\ell(\cdot)$, a convex regularizer $R(\cdot)$ with
  continuous Hessians, assumptions \ref{appxas:linear}, \ref{appxas:lip},
  and \ref{appxas:l2inp}, and assuming that  $\lambda_{\max}(\nabla^2 \ell) \le L$, the
  loss perturbation method in Algorithm~\ref{alg:loss_perturb} is
  $\epsilon$-differentially private.
\end{appendix_theorem}

\begin{proof}
  The proof structure here closely follows that of Theorem 9
  in~\cite{chaudhuri2011} and Theorem 2 in~\cite{kifer2012} with the
  appropriate generalizations to account for the multi-class setting.

  Let $\bw = \vect{\theta}$ and $\bb = \vect{\bB}$. We aim to bound the ratio of
  the two densities:
  \begin{equation}
    \label{eq:loss_densities}
    \frac{g(\theta_{\text{priv}} \mid \calD)}{g(\theta_{\text{priv}} \mid \calD')} =
    \frac{g(\bw_{\text{priv}} \mid \calD)}{g(\bw_{\text{priv}} \mid \calD')}.
  \end{equation}
  By the same argument used in the proof of Theorem 9 of~\cite{chaudhuri2011},
  there is a bijection from $\bw_{\text{priv}}$ to $\bb$ for every $\calD$.
  Hence the ratio in Equation~\ref{eq:loss_densities} can be written as:
  \begin{equation}
    \label{eq:densities_expanded}
    \frac{g(\priv{\bw} \mid \calD)}{g(\priv{\bw} \mid \calD')}
    = \frac{\mu(\bb \mid \calD)}{\mu(\bb' \mid \calD')} \cdot
    \frac{|\det( \bJ(\priv{\bw} \rightarrow \bb \mid \calD))|^{-1}}{|\det( \bJ(\priv{\bw} \rightarrow \bb' \mid \calD'))|^{-1}},
  \end{equation}
  where $\mu(\cdot)$ is the conditional noise density given dataset $\calD$ and
  $\bJ(\priv{\bw}\rightarrow \bb \mid \calD)$ is the Jacobian of the mapping from
  $\priv{\bw}$ to $\bb$ for a given $\calD$.

  We bound each term in Equation~\ref{eq:densities_expanded}, starting with a
  bound on the ratio of the determinants of the Jacobians.

  For any $\priv{\bw}$ and $\calD$, we can solve for the noise term $\bb$ added
  to the objective in Algorithm~\ref{alg:loss_perturb}:
  \begin{equation}
  \label{eq:solved_noise}
  \bb  =  -\lambda \nabla R(\priv{\bw}) - \sum_{i=1}^N \nabla \ell \otimes \bx_i - \rho \priv{\bw},
  \end{equation}
  where we use the short-hand $\nabla_\ba \ell(\ba) = \nabla \ell$. By
  assumption~\ref{appxas:linear}, we have $\nabla_\theta \ell = \nabla \ell
  \bx^\top$ and hence $\nabla_\bw \ell = \nabla \ell \otimes \bx$.

  We can differentiate Equation~\ref{eq:solved_noise} to get:
  \begin{equation}
    \nabla \bb =  -\lambda \nabla^2 R(\priv{\bw}) - \sum_{i=1}^N \nabla^2 \ell \otimes (\bx_i \bx_i^\top) - \rho \bI_{DC},
  \end{equation}
  where $\bI_{DC}$ is the $DC$-dimensional identity matrix.

  Without loss of generality, let $\calD'$ contain one more example than
  $\calD$ which we denote by $\bx', \by'$, \emph{i.e.} $\calD' = \calD \cup
  \{(\bx', \by')\}$.
  We define the matrices $\bA$ and $\bE$ as:
  \begin{equation}
    \bA =  \lambda \nabla^2 R(\priv{\bw}) + \sum_{i=1}^{N} \nabla^2 \ell \otimes (\bx_i \bx_i^\top) + \rho \bI_{DC},
  \end{equation}
  and:
  \begin{equation}
    \bE = \nabla^2 \ell \otimes (\bx' \bx'^\top).
  \end{equation}
  Thus:
  \begin{equation}
    \label{eq:bound_with_a_e}
    \frac{|\det( \bJ(\priv{\bw} \rightarrow \bb' \mid \calD'))|}{|\det( \bJ(\priv{\bw} \rightarrow \bb \mid \calD))|}
    = \frac{|\det(\bA + \bE)|}{|\det(\bA)|}.
  \end{equation}
  Given that $\nabla^2 \ell \in \mathbb{R}^{C\times C}$ we
  know that $\text{Rank}(\nabla^2 \ell) \le C$. Since
  $\text{Rank}(\bx \bx^\top) = 1$, we have that $\text{Rank}(\nabla^2 \ell
  \otimes (\bx \bx^\top)) \le C$~\cite{horn1994topics}, and thus
  $\text{Rank}(\bE) \le C$. Thus we can apply
  Lemma~\ref{lem:rank_eig_bound} to bound the ratio:
  \begin{equation}
    \label{eq:a_e_bound}
    \frac{|\det(\bA + \bE)|}{|\det(\bA)|} \le (1 + \lmax{\bA^{-1}\bE})^{C}.
  \end{equation}
  Given that $R(\cdot)$ and $\ell(\cdot)$ are convex, the eigenvalues of $\bA$
  are at least $\rho$, and thus $\lmax{\bA^{-1}\bE} \le
  \frac{1}{\rho}\lmax{\bE}$.
  Given the assumption that $\lambda_{\max}(\nabla^2 \ell) \le L$ and by
  assumption~\ref{appxas:l2inp} $\lambda_{\max}(\bx \bx^\top) \le 1$, we have that
  $\lambda_{\max}(\nabla^2 \ell \otimes (\bx \bx^\top)) \le L$~\cite{horn1994topics},
  thus $\lmax{\bE} \le L$.
  Combining these observations with Equations~\ref{eq:bound_with_a_e} and
  \ref{eq:a_e_bound}, we have a bound on the ratio:
  \begin{equation}
    \frac{|\det( \bJ(\priv{\bw} \rightarrow \bb' \mid \calD'))|}{|\det( \bJ(\priv{\bw} \rightarrow \bb \mid \calD))|}
     \le \left(1 + \frac{L}{\rho}\right)^{C}.
  \end{equation}

  Since $\rho \ge \frac{2LC}{\epsilon}$, we have:
  \begin{equation}
    \left(1 + \frac{L}{\rho}\right)^{C} \le \left(1 + \frac{\epsilon}{2C}\right)^{C} \le e^{\epsilon / 2},
  \end{equation}
  using the identity $1+x \le e^x$ for all $x$.
  Thus we have a final bound on the ratio:
  \begin{equation}
    \label{eq:jacobian_ratio_bound}
    \frac{|\det( \bJ(\priv{\bw} \rightarrow \bb' \mid \calD'))|}{|\det( \bJ(\priv{\bw} \rightarrow \bb \mid \calD))|}
     \le e^{\epsilon/2}.
  \end{equation}

  Next, we bound the ratio $\frac{\mu(\bb \mid \calD)}{\mu(\bb' \mid \calD')}$.
  From Equation~\ref{eq:solved_noise} we have:
  \begin{equation}
    \bb' - \bb = \nabla \ell \otimes \bx'.
  \end{equation}
  Thus:
  \begin{equation}
    \label{eq:noise_norm_bound}
    \|\bb'\|_2 - \|\bb\|_2 \le \|\bb' - \bb\|_2  = \left\| \nabla \ell \otimes \bx' \right\|_2 \le \| \nabla \ell \|_2 \| \bx' \|_2 \le K,
  \end{equation}
  where we use the triangle inequality in the first step, the fact
  that $\|\bu \otimes \bv\|_2 = \|\bu \bv^\top\|_F = \|\bu\|_2 \|\bv\|_2$ in
  the third step, and assumptions~\ref{appxas:lip} and \ref{appxas:l2inp} in the fifth
  step.
  Following the argument in the proof of Theorem 9 of~\cite{chaudhuri2011}, we have:
  \begin{equation}
    \label{eq:noise_density_chaudhuri}
    \frac{\mu(\bb \mid \calD)}{\mu(\bb' \mid \calD')} \le e^{\frac{\epsilon}{2K}(\|\bb\|_2 -\|\bb'\|_2)}.
  \end{equation}
  Combining Equations~\ref{eq:noise_norm_bound} and \ref{eq:noise_density_chaudhuri}, we have:
  \begin{equation}
    \label{eq:noise_density_bound}
    \frac{\mu(\bb \mid \calD)}{\mu(\bb' \mid \calD')} \le e^{\epsilon / 2}.
  \end{equation}
  Finally, combining Equations~\ref{eq:densities_expanded}, \ref{eq:jacobian_ratio_bound} and \ref{eq:noise_density_bound}, yields:
  \begin{equation}
    \frac{g(\priv{\bw} \mid \calD)}{g(\priv{\bw} \mid \calD')}
    = \frac{\mu(\bb \mid \calD)}{\mu(\bb' \mid \calD')} \cdot
    \frac{|\det( \bJ(\priv{\bw} \rightarrow \bb \mid \calD))|^{-1}}{|\det( \bJ(\priv{\bw} \rightarrow \bb' \mid \calD'))|^{-1}} \le e^{\epsilon / 2} e^{\epsilon / 2} \le e^\epsilon,
  \end{equation}
  completing the proof.
\end{proof}

We can use noise from a Gaussian distribution by letting $\delta >
0$~\cite{kifer2012}. We generalize the approach of~\cite{kifer2012} to the
multi-class setting.

\begin{algorithm}
    \caption{Multi-class Gaussian loss perturbation.}
    \label{alg:gaussian_loss_perturb}
\begin{algorithmic}
    \STATE {\bf Inputs:} Privacy parameters $\epsilon$ and $\delta$, and dataset $\calD$.
    \STATE {\bf Output:} $\theta_{\textrm{priv}}$, the $(\epsilon, \delta)$-differentially private classifier.
    \STATE Set $\rho \ge \frac{2LC}{\epsilon}$.
    \STATE Sample $\bB$ from the distribution in Equation~\ref{eq:gaussian_noise}
      with $\sigma = \frac{K}{\epsilon} \sqrt{8 \log (2 / \delta) + 4 \epsilon}$.
    \vspace{0.5mm}
    \STATE Compute $\theta_{\textrm{priv}} = \argmin_\theta J(\theta, \calD) + \frac{1}{N}\text{tr}(\bB^\top \theta) + \frac{\rho}{2N} \|\theta\|_F^2$.
\end{algorithmic}
\end{algorithm}

\begin{appendix_theorem}
  Given a convex loss $\ell(\cdot)$, a convex regularizer $R(\cdot)$ with
  continuous Hessians, assumptions \ref{appxas:linear}, \ref{appxas:lip},
  and \ref{appxas:l2inp}, and assuming that  $\lambda_{\max}(\nabla^2 \ell) \le L$, the
  loss perturbation method in Algorithm~\ref{alg:gaussian_loss_perturb} is
  $(\epsilon, \delta)$-differentially private.
\end{appendix_theorem}

\begin{proof}
  Let $\bw = \vect{\theta}$ and $\bb = \vect{\bB}$. We aim to bound the ratio of
  the two densities:
  \begin{equation}
    \label{eq:g_loss_densities}
    \frac{g(\theta_{\text{priv}} \mid \calD)}{g(\theta_{\text{priv}} \mid \calD')} =
    \frac{g(\bw_{\text{priv}} \mid \calD)}{g(\bw_{\text{priv}} \mid \calD')},
  \end{equation}
  with probability at least $1-\delta$.

  As in Theorem~\ref{apxthm:loss_perturb}, there is a bijection from
  $\bw_{\text{priv}}$ to $\bb$ for every $\calD$.
  Hence the ratio in Equation~\ref{eq:g_loss_densities} can be written as:
  \begin{equation}
    \label{eq:g_densities_expanded}
    \frac{g(\priv{\bw} \mid \calD)}{g(\priv{\bw} \mid \calD')}
    = \frac{\mu(\bb \mid \calD)}{\mu(\bb' \mid \calD')} \cdot
    \frac{|\det( \bJ(\priv{\bw} \rightarrow \bb \mid \calD))|^{-1}}{|\det( \bJ(\priv{\bw} \rightarrow \bb' \mid \calD'))|^{-1}},
  \end{equation}
  where $\mu(\cdot)$ is the conditional noise density given dataset $\calD$ and
  $\bJ(\priv{\bw}\rightarrow \bb \mid \calD)$ is the Jacobian of the mapping from
  $\priv{\bw}$ to $\bb$ for a given $\calD$.

  From Equation~\ref{eq:jacobian_ratio_bound} in the proof of
  Theorem~\ref{apxthm:loss_perturb}, we have that:
  \begin{equation}
    \label{eq:g_jacobian_ratio_bound}
    \frac{|\det( \bJ(\priv{\bw} \rightarrow \bb \mid \calD))|^{-1}}{|\det( \bJ(\priv{\bw} \rightarrow \bb' \mid \calD'))|^{-1}}
     \le e^{\epsilon/2}.
  \end{equation}

  Following the same arguments as the proof of Theorem 2
  in~\cite{kifer2012}, we have with at least probability $1-\delta$:
  \begin{equation}
    \frac{\mu(\bb \mid \calD)}{\mu(\bb' \mid \calD')} \le e^{\frac{1}{2\beta^2}\left(\beta K \sqrt{8\log(2 / \delta)} + K^2\right)}.
  \end{equation}
  If we choose $\beta \ge \frac{K}{\epsilon} \sqrt{8 \log (2/\delta) + 4\epsilon}$ then:
  \begin{equation}
    \label{eq:g_density_bound}
    \frac{\mu(\bb \mid \calD)}{\mu(\bb' \mid \calD')} \le e^{\epsilon / 2}.
  \end{equation}
  Combining Equations~\ref{eq:g_densities_expanded}, \ref{eq:g_jacobian_ratio_bound} and \ref{eq:g_density_bound} yields:
  \begin{equation}
    \frac{g(\priv{\bw} \mid \calD)}{g(\priv{\bw} \mid \calD')} \le e^\epsilon,
  \end{equation}
  with probability at least $1-\delta$, completing the proof.

\end{proof}

\begin{lemma}
\label{lem:rank_eig_bound}
Given symmetric and positive semidefinite matrices, $\bA$ and $\bE$, if $\bA$ is full rank, and $\bE$ has rank at most $C$, then:
\begin{equation}
  \frac{\det(\bA + \bE)}{\det{\bA}} \le (1 + \lmax{\bA^{-1}\bE})^C,
\end{equation}
  where $\lmax{\bA^{-1}\bE}$ is the maximum eigenvalue of $\bA^{-1}\bE$.
\end{lemma}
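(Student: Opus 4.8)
The plan is to reduce the ratio of determinants to the determinant of a low‑rank perturbation of the identity and then bound it by counting nonzero eigenvalues. First I would use the invertibility of $\bA$ (it is full rank and symmetric positive definite) to write
$\frac{\det(\bA+\bE)}{\det(\bA)} = \det\!\big(\bA^{-1}(\bA+\bE)\big) = \det(\bI + \bA^{-1}\bE)$.

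Next I would show that the eigenvalues of $\bA^{-1}\bE$ are real and lie in $[0, \lmax{\bA^{-1}\bE}]$. Since $\bA$ is symmetric positive definite it has a symmetric positive definite square root $\bA^{1/2}$, and $\bA^{-1}\bE = \bA^{-1/2}\big(\bA^{-1/2}\bE\bA^{-1/2}\big)\bA^{1/2}$ is similar to the symmetric matrix $\bA^{-1/2}\bE\bA^{-1/2}$. The latter is positive semidefinite because $\bE$ is, so its eigenvalues — hence those of $\bA^{-1}\bE$ — are nonnegative and bounded above by $\lmax{\bA^{-1}\bE}$. Moreover, multiplication by the invertible matrix $\bA^{-1}$ preserves rank, so $\text{Rank}(\bA^{-1}\bE) = \text{Rank}(\bE) \le C$, which means at most $C$ of these eigenvalues are nonzero.

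Finally I would expand $\det(\bI + \bA^{-1}\bE) = \prod_i (1 + \mu_i)$, where $\mu_1, \dots, \mu_{DC}$ are the eigenvalues of $\bA^{-1}\bE$. Each factor with $\mu_i = 0$ equals $1$; each of the at most $C$ factors with $\mu_i > 0$ satisfies $1 \le 1 + \mu_i \le 1 + \lmax{\bA^{-1}\bE}$. Multiplying these bounds together, and using $1 + \lmax{\bA^{-1}\bE} \ge 1$ to pad the product out to exactly $C$ factors when fewer than $C$ eigenvalues are nonzero, gives $\det(\bI + \bA^{-1}\bE) \le (1 + \lmax{\bA^{-1}\bE})^C$, which is the claim.

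I do not expect a serious obstacle. The one point that needs care is that $\bA^{-1}\bE$ is not itself symmetric, so the reality and nonnegativity of its spectrum must be obtained through the similarity transform to $\bA^{-1/2}\bE\bA^{-1/2}$ rather than asserted directly; the rest is routine multiplicativity of the determinant and an elementary product bound.
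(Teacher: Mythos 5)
Your proposal is correct and follows essentially the same route as the paper's proof: rewrite the ratio as $\det(\bI + \bA^{-1}\bE)$, note that $\bA^{-1}\bE$ has at most $C$ nonzero eigenvalues by the rank bound, and bound each corresponding factor by $1 + \lmax{\bA^{-1}\bE}$. Your extra step justifying that the spectrum of $\bA^{-1}\bE$ is real and nonnegative via similarity to $\bA^{-1/2}\bE\bA^{-1/2}$ is a detail the paper leaves implicit, and it is a welcome clarification rather than a different argument.
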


\begin{proof}
  \begin{align*}
  \frac{\det(\bA + \bE)}{\det{\bA}} &= \det{(\bI + \bA^{-1}\bE)} \\
    &= \prod_{c=1}^C (1 + \lambda_c(\bA^{-1}\bE)) \\
    &\le (1 + \lmax{\bA^{-1}\bE})^C.
  \end{align*}
  In the second step we use the facts that $\lambda_i(\bI + \bA^{-1} \bE) = 1 +
  \lambda_i(\bA^{-1}\bE)$ and that $\bA^{-1} \bE$ has rank at most $C$ and
  hence has at most $C$ nonzero eigenvalues.
\end{proof}

\section{Private Prediction}

\subsection{Prediction Sensitivity}

The prediction sensitivity method protects the privacy of the underlying
training dataset, $\calD$, by adding a unique noise vector to the logit predictions, $\phi'(\hat{\bx}; \theta)$,
for each test example $\hat{\bx}$.

The prediction sensitivity method is given in
Algorithm~\ref{alg:prediction_perturb}. The algorithm accepts as input a budget
$B$ of test examples and guarantees $\epsilon$-differential privacy of $\calD$
until the budget is exhausted after which the privacy of $\calD$ degrades.

\begin{algorithm}
    \caption{Multi-class prediction sensitivity.}
    \label{alg:prediction_perturb}
\begin{algorithmic}
  \STATE {\bf Inputs:} Privacy parameter $\epsilon$, non-private model $\phi'(\cdot)$, test example $\hat{\bx}$, and budget $B$.
  \STATE {\bf Output:} $\hat{\by}_\textrm{priv}$, the prediction which is
    $\epsilon$-differentially private with respect to the training dataset
    $\calD$.
  \STATE Sample $\bb$ from the distribution in Equation~\ref{eq:sqrt_noise} with $\beta = \frac{N\lambda\epsilon}{2KB}$.
    \STATE Compute $\hat{\by}_\textrm{priv} = \phi'(\hat{\bx}; \theta) + \bb$.
\end{algorithmic}
\end{algorithm}

\begin{appendix_theorem}
  \label{apxthm:prediction_perturb}
  Given assumptions \ref{appxas:loss}, \ref{appxas:regularizer}, \ref{appxas:linear},
  \ref{appxas:lip}, and \ref{appxas:l2inp}, the prediction sensitivity method in
  Algorithm~\ref{alg:prediction_perturb} is $\epsilon$-differentially private.
\end{appendix_theorem}

\begin{proof}
  The proof proceeds the same way as that of Theorem~\ref{apxthm:model_perturb}.
  First, we bound the privacy loss in terms of the sensitivity of the minimizer
  of $J(\cdot)$ in the Frobenius norm and then we apply
  Lemma~\ref{lem:sensitivity_bound}.

  Let $\theta_1\!=\!\argmin_\theta J(\theta; \calD)$ and $\theta_2\!=\!\argmin_\theta J(\theta; \calD')$.
  For all datasets $\calD$ and $\calD'$ which differ by one example and all
  $\bx$, we have:
  \begin{align*}
      &\log \frac{p(\calA(\calD, \bx) = \by)}{p(\calA(\calD', \bx) = \by)} \\
      &=\log \frac{e^{-\beta \|\theta_1^\top \bx + \bb \|_2}}{e^{-\beta \|\theta_2^\top \bx + \bb\|_2}} \\
      &= \beta \left(\|\theta^\top_2 \bx + \bb\|_2 - \|\theta_1^\top \bx + \bb \|_2 \right)  \\
      &\le \beta \|\theta_1^\top \bx - \theta_2^\top \bx \|_2 \\
      &\le \beta \|(\theta_1 - \theta_2)^\top \bx \|_2,
  \end{align*}
  where we use the triangle inequality in the second to last step.  We also have that:
  \begin{equation}
      \|(\theta_1 - \theta_2)^\top \bx \|_2 \le \|\theta_1 - \theta_2 \|_F \| \bx\|_2 \le \|\theta_1 - \theta_2 \|_F,
  \end{equation}
  where we use Cauchy-Schwarz in the first inequality and assumption~\ref{appxas:l2inp} in the second.
  Thus:
  \begin{equation}
      \log \frac{p(\calA(\calD, \bx) = \by)}{p(\calA(\calD', \bx) = \by)} \le
      \beta \|\theta_1 - \theta_2 \|_F. \label{eq:prediction_privloss}
  \end{equation}
  Combining Equation~\ref{eq:prediction_privloss} with Lemma~\ref{lem:sensitivity_bound} yields:
  \begin{equation}
      \log \frac{p(\calA(\calD, \bx) = \by)}{p(\calA(\calD', \bx) = \by)} \le \beta
      \|\theta_1 - \theta_2\|_F \le \frac{\beta 2 K}{N \lambda}.
  \end{equation}
  If we choose $\beta = \frac{N \lambda \epsilon}{2 K}$, we achieve
  $\epsilon$-differential privacy when releasing the predictions on a single
  example $\bx$. If we choose $\beta = \frac{N \lambda \epsilon}{2 K B}$ then
  by standard compositional arguments of differential
  privacy~\cite{dwork2006} we achieve $\epsilon$-differential
  privacy with a budget of $B$ test queries.
\end{proof}

\begin{algorithm}
    \caption{Multi-class Gaussian prediction sensitivity.}
    \label{alg:gaussian_prediction_perturb}
\begin{algorithmic}
  \STATE {\bf Inputs:} Privacy parameters $\epsilon$ and $\delta$,
  non-private model $\phi'(\cdot)$, test example $\hat{\bx}$, and budget $B$.
  \STATE {\bf Output:} $\hat{\by}_\textrm{priv}$, the prediction which is
    $(\epsilon, \delta)$-differentially private with respect to the training dataset
    $\calD$.
  \STATE Let $\sigma' = \frac{2K \alpha}{N \lambda \sqrt{2\epsilon^*}}$ with $\alpha$ computed using Algorithm~\ref{alg:analytic_gaussian}, $\epsilon^* = \epsilon / B$, and $\delta^* = \delta / B$.
  \STATE {\bf Minimize} $\sigma''=\frac{2K\alpha}{N \lambda \sqrt{2\epsilon^*}}$ over $\delta' \in (0, \delta - B\delta^*)$ using linear search {\bf where}:
  \STATE\hspace{\algorithmicindent} $\epsilon^* = \sqrt{2 / B} \left(\sqrt{\ln(1/\delta') + \epsilon} - \sqrt{\ln(1/\delta')}\right)$,
  \STATE\hspace{\algorithmicindent} $\delta^* = (\delta - \delta') / B$,
  \STATE\hspace{\algorithmicindent} and where $\alpha$ is computed using Algorithm~\ref{alg:analytic_gaussian}.
  \STATE Sample $\bb$ from the distribution in Equation~\ref{eq:gaussian_noise} with $\sigma = \min(\sigma', \sigma'')$.
  \STATE Compute $\hat{\by}_\textrm{priv} = \phi'(\hat{\bx}; \theta) + \bb$.
\end{algorithmic}
\end{algorithm}

\begin{appendix_theorem}
  \label{apxthm:gaussian_prediction_perturb}
  Given assumptions \ref{appxas:loss}, \ref{appxas:regularizer}, \ref{appxas:linear},
  \ref{appxas:lip}, and \ref{appxas:l2inp}, the Gaussian prediction sensitivity method in
  Algorithm~\ref{alg:gaussian_prediction_perturb} is $(\epsilon, \delta)$-differentially
  private for $\delta \in (0, 1)$.
\end{appendix_theorem}

\begin{proof}
  As in the proof of Theorem~\ref{apxthm:gaussian_model_perturb}, we bound the $L_2$
  sensitivity and apply Theorem 9 of~\cite{balle2018improving}.

  Let $\theta_1 = \argmin_\theta J(\theta; \calD)$ and similarly $\theta_2 = \argmin_\theta J(\theta; \calD')$.
  From Theorem~\ref{apxthm:prediction_perturb} we have:
  \begin{equation}
    \label{eq:gaussian_pred_sensitivity}
    \max_{\calD, \calD', \bx}  \| \theta_1^\top \bx - \theta_2^\top \bx \|_2 \le \frac{2K}{N\lambda}.
  \end{equation}

  Recall from our description of the Gaussian model sensitivity method that Theorem 8 of~\cite{balle2018improving} states that the Gaussian mechanism with standard deviation $\sigma'$ is $(\epsilon^*, \delta^*)$-differentially
  private if and only if:
  \begin{equation}
    \Phi\left( \frac{\Delta_2 f}{2\sigma'} - \frac{\epsilon^* \sigma'}{\Delta_2 f} \right) - e^{\epsilon^*} \Phi \left(-\frac{\Delta_2 f}{2\sigma'} - \frac{\epsilon^*\sigma'}{\Delta_2 f} \right) \leq \delta^*,
  \end{equation}
  where $\Delta_2 f$ is the $L_2$ sensitivity of $f(\cdot)$.

  Combining Equation~\ref{eq:gaussian_pred_sensitivity} with Theorem 9 of~\cite{balle2018improving}, we can obtain $(\epsilon^*, \delta^*)$-differential privacy on a single prediction by setting:
  \begin{equation}
  \label{eq:individual_sigma}
  \sigma'' = \frac{2K \alpha}{N \lambda \sqrt{2\epsilon^*}},
  \end{equation}
  where $\alpha$ is defined in Algorithm~\ref{alg:analytic_gaussian}.

  Theorem 1.1 of~\cite{dwork2016concentrated} states that $B$ compositions of a
  $(\epsilon^*, \delta^*)$-differentially private mechanism satisfies $(\epsilon,
  \delta)$-differential privacy where:
  \begin{equation}
    \epsilon =  \sqrt{2B \ln(1/\delta')} \epsilon^* + B \epsilon^* (e^{\epsilon^*}-1) / 2,
  \end{equation}
  and:
  \begin{equation}
    \delta = \delta' + B \delta^*.
  \end{equation}

  We can solve for $\epsilon^*$ and $\delta^*$ in terms of $\epsilon$, $\delta$
  and $\delta'$. Given that $\epsilon$ and $\delta$ are pre-specified, this leaves
  $\delta'$ to be determined.

  Using the fact that $1+x \le e^x$ for all $x$, we have:
  \begin{equation}
    \sqrt{2B \ln(1/\delta')} \epsilon^* + B (\epsilon^*)^2  / 2 \le
    \sqrt{2B \ln(1/\delta')} \epsilon^* + B \epsilon^* (e^{\epsilon^*}-1) / 2.
  \end{equation}
  This is a quadratic in $\epsilon^*$ which we can solve to obtain:
  \begin{equation}
    \epsilon^* = \sqrt{2 / B} \left(\sqrt{\ln(1/\delta') + \epsilon} - \sqrt{\ln(1/\delta')}\right).
  \end{equation}
  We also have:
  \begin{equation}
    \delta^* = (\delta - \delta') / B.
  \end{equation}
  The algorithm will be $(\epsilon, \delta)$-differentially private with
  respect to $B$ predictions for any $\epsilon^*$ and $\delta^*$ which satisfy
  the above equations. Thus, for a given $\epsilon$, $\delta$ and $B$, we can
  choose $\delta'$ which minimizes $\sigma''$ in
  Equation~\ref{eq:individual_sigma} and achieve $(\epsilon,
  \delta)$-differential privacy with a budget $B$.

  For small values of the budget $B$, the standard composition theorem of $(\epsilon, \delta)$-differential
  privacy (\emph{e.g.}, Theorem 3.16 of~\cite{dwork2011}) may actually lead to smaller standard deviations in the Gaussian noise distribution.
  Combining the standard composition theorem with Equation~\ref{eq:gaussian_pred_sensitivity} and Theorem 9 of~\cite{balle2018improving},
  we can obtain $(\epsilon, \delta)$-differential privacy on a single prediction by setting:
  \begin{equation}
  \sigma' = \frac{2K \alpha}{N \lambda \sqrt{2\epsilon^*}},
  \end{equation}
  where $\alpha$ is obtained via Algorithm~\ref{alg:analytic_gaussian} with
  $\epsilon^* = \epsilon / B$ and $\delta^* \ \delta / B$.

  Because both $\sigma'$ and $\sigma''$ provide the required differential privacy guarantee, so does the mechanism
  in Algorithm~\ref{alg:gaussian_prediction_perturb} that sets $\sigma = \min(\sigma', \sigma'')$.
\end{proof}

\subsection{Subsample-and-Aggregate}

The subsample-and-aggregate approach of~\cite{dwork2018} splits the training
dataset $\calD$ into $T$ disjoint subsets of size $\nicefrac{|\calD|}{T}$.  A
set of $T$ models, $\{\phi'_1(\cdot; \theta_1), \ldots, \phi'_T(\cdot;
\theta_T)\}$, are learned, one for each subset, where $\phi'_t(\cdot)$ outputs
a one-hot vector of size $C$. To make a prediction, the classifiers are
combined using a soft majority vote. For a given $\bx$, the class label $\by$
is predicted with probability proportional to:
\begin{equation}
    \label{eq:subsample_predict}
    \exp\left(\beta \cdot | \{t : t \in \{1, \dots, T \}, \phi'_t(\bx; \theta_t) = \by \}|\right).
\end{equation}
Privacy by classifying with probability proportional to an exponentiated utility function
is known as the exponential mechanism~\cite{dwork2014algorithmic} and is due
to~\cite{mcsherry2007mechanism}.

\begin{algorithm}
    \caption{Subsample-and-aggregate.}
    \label{alg:subsample_and_aggregate}
\begin{algorithmic}
  \STATE {\bf Inputs:} Privacy parameters $\epsilon$ and $\delta$, training dataset $\calD$, a number of models $T$, and a set of queries $\mathcal{Q} =\{\hat{\bx}_1, \dots, \hat{\bx}_B\}$.
  \STATE {\bf Output:} $\{\hat{\by}_1, \dots \hat{\by}_B\}$, the predictions which are
    $\epsilon$-differentially private with respect to the training dataset $\calD$.
  \STATE Partition $\calD$ into $T$ disjoint subsets $\{\calD_t : t=1,\ldots,T \}$ of size $\lfloor \nicefrac{|\calD|}{T} \rfloor$.
  \FORALL {$\calD_t$}
    \STATE Train classifier $\phi'_t(\cdot; \theta_t)$ on $\calD_t$.
  \ENDFOR
  \IF {$\delta > 0$}
    \STATE Set $\beta = \max\left(\epsilon / B, \sqrt{2 / B} \left(\sqrt{\ln(1/\delta) + \epsilon} - \sqrt{\ln(1/\delta)}\right)\right)$.
  \ELSE
    \STATE Set $\beta = \epsilon / B$.
  \ENDIF
  \FOR {b = 1, \dots, B}
    \STATE Sample $\hat{\by}_b$ with probability proportional to Equation~\ref{eq:subsample_predict}.
  \ENDFOR
\end{algorithmic}
\end{algorithm}

\begin{appendix_theorem}
  \label{appxthm:subandagg}
  The subsample-and-aggregate method in Algorithm~\ref{alg:subsample_and_aggregate} with $\beta = \epsilon / B$ is $\epsilon$-differentially
  private.
\end{appendix_theorem}

\begin{proof}
  Consider any two datasets $\calD$ and $\calD'$ which differ by at most one
  example. Since the classifiers $\phi'_t(\cdot)$ are trained on disjoint
  subsets of $\calD$, at most one classifier can change its prediction for any
  given instance $\bx$ when we switch from training on $\calD$ to $\calD'$.
  If we let $\phi'_t(\cdot; \theta)$ denote classifiers trained on $\calD$ and
  $\phi'_t(\cdot; \theta')$ denote classifiers trained on $\calD'$, we have:
  \begin{equation}
    \label{eq:subsample_sensitivity}
    \argmax_{\calD, \calD', \bx, \by} \Big| | \{t : t \in \{1, \dots, T \}, \phi'_t(\bx; \theta_t) = \by \}| - | \{t : t \in \{1, \dots, T \}, \phi'_t(\bx; \theta'_t) = \by \}|  \Big| \le 1.
  \end{equation}
  In other words, the sensitivity of the majority vote is $1$.
  For all adjacent datasets, $\calD$ and $\calD'$, all examples $\bx$, and all class labels $\by$:
  \begin{align}
      \log &\frac{p(\calA(\bx, \calD) = \by)}{p(\calA(\bx, \calD') = \by)} =
      \log \frac{\exp(\beta \cdot | \{t : t \in \{1, \dots, T \}, \phi'_t(\bx; \theta_t) = \by \}|)}
        {\exp(\beta \cdot | \{t : t \in \{1, \dots, T \}, \phi'_t(\bx; \theta'_t) = \by \}|)} \\
      &\le \beta \cdot | \{t : t \in \{1, \dots, T \}, \phi'_t(\bx; \theta_t) = \by \}|
        - \beta \cdot | \{t : t \in \{1, \dots, T \}, \phi'_t(\bx; \theta'_t) = \by \}| \\
      &\le \beta \Big| | \{t : t \in \{1, \dots, T \}, \phi'_t(\bx; \theta_t) = \by \}|
        - | \{t : t \in \{1, \dots, T \}, \phi'_t(\bx; \theta'_t) = \by \}|  \Big| \\
      &\le \beta,
  \end{align}
  where we use Equation~\ref{eq:subsample_sensitivity} in the second to last step.

  Thus if we set $\beta = \epsilon / B$, by standard compositional arguments~\cite{dwork2006},
  we achieve $\epsilon$-differential privacy with a budget of $B$ test queries
  by predicting $\by$ with probability proportional to Equation~\ref{eq:subsample_predict}.
\end{proof}

We can achieve a better scaling of $\beta$ with the budget $B$ by letting
$\delta >0$ and relying on the advanced composition
theorem~\cite{dwork2010boosting}.

\begin{appendix_theorem}
  The subsample-and-aggregate method in Algorithm~\ref{alg:subsample_and_aggregate} with $\beta=\max(\beta', \beta'')$, where $\beta' = \epsilon / B$ and $\beta'' =  \sqrt{2 / B}\left(\sqrt{\ln(1/\delta) + \epsilon} - \sqrt{\ln(1/\delta)}\right)$, is $(\epsilon, \delta)$-differentially private.
\end{appendix_theorem}

\begin{proof}
  From Theorem~\ref{appxthm:subandagg}, the subsample-and-aggregate method with $B=1$ is $(\beta, 0)$-differentially private.

  Theorem 1.1 of~\cite{dwork2016concentrated} states that $B$ compositions of a $(\beta, 0)$-differentially private mechanism satisfies $\left(\sqrt{2B \ln(1/\delta)} \beta + B \beta (e^{\beta}-1) / 2, \delta\right)$.
  We can solve for $\beta$ such that the subsample-and-aggregate algorithm with a budget of $B$ satisfies $(\epsilon, \delta)$-differential privacy.
  Using the fact that $1+x \le e^x$ for all $x$, we have:
  \begin{equation}
    \sqrt{2B \ln(1/\delta)} \beta + B \beta^2  / 2 \le
    \sqrt{2B \ln(1/\delta)} \beta + B \beta (e^{\beta}-1) / 2.
  \end{equation}
  Hence if $\sqrt{2B \ln(1/\delta)} \beta + B \beta^2  / 2 \le \epsilon$ then the resulting algorithm will satisfy $(\epsilon, \delta)$-differential privacy.
  This is a quadratic in $\beta$ which we can solve to obtain:
  \begin{equation}
    \beta = \sqrt{2 / B} \left(\sqrt{\ln(1/\delta) + \epsilon} - \sqrt{\ln(1/\delta)}\right).
  \end{equation}
  If this value of $\beta$ is smaller than $\epsilon / B$, we can use Theorem 7 instead and set $\beta=\epsilon / B$.
\end{proof}

\section{Multi-class Logistic Loss}

In practice, we need to specify $\ell(\theta^\top \bx, \by)$ and bound $\| \nabla \ell
\|_2$ for the model sensitivity and loss perturbation methods.
The commonly used multi-class logistic loss is given by:
\begin{equation}
    \label{eq:crossent}
    \ell(\ba, \by) = \sum_{i=1}^C y_i \log \frac{e^{a_i}}{\sum_{j=1}^C e^{a_j}},
\end{equation}
where, for linear models, $\ba = \theta^\top \bx$.

\begin{appendix_theorem}
  The Lipschitz constant of the multi-class logistic loss (Equation~\ref{eq:crossent}) is $K = \sqrt{2}$.
\end{appendix_theorem}

\begin{proof}
  Since $\by \in \Delta^C$, we can write
  Equation~\ref{eq:crossent} as:
  \begin{equation}
      \ell(\ba, \by) = \by^\top \ba  - \log Z,
  \end{equation}
  where $Z = \sum_{j=1}^C e^{a_j}$, which has a gradient given by:
  \begin{equation}
    \nabla \ell = \by - \frac{1}{Z} e^{\ba}.
  \end{equation}
  By Corollary~\ref{cor:simplex_bound} the maximum $L_2$ distance between any
  two points on the probability simplex $\sqrt{2}$.
  Given that both $\by, \frac{1}{Z} e^{\ba} \in \Delta^C$:
  \begin{equation}
    \| \nabla \ell \|_2 = \left\| \by - \frac{e^{\ba}}{\sum_{j=1}^C e^{a_j}} \right\|_2 \le \sqrt{2}.
  \end{equation}
\end{proof}

For the loss perturbation method we also need to bound the eigenvalues and the rank
of the Hessian of Equation~\ref{eq:crossent} with respect to $\ba$.

The Hessian of the multi-class logistic loss with respect to $\ba$ is given by:
\begin{equation}
\label{eq:crossent_hessian}
\nabla^2 \ell = \text{diag}(\bp) - \bp \bp^\top,
\end{equation}
where $\bp = \frac{1}{Z} e^{\ba}$~\cite{boyd2004convex}. Since $\nabla^2 \ell
\in \mathbb{R}^{C\times C}$, the rank is at most $C$.

\begin{appendix_theorem}
  The eigenvalues of the Hessian of the multi-class logistic loss are
  bounded by $0.5$, \emph{i.e.}:
  \begin{equation}
    \lambda_{\max}(\nabla^2 \ell(\theta, \bx, \by)) \le 0.5,
  \end{equation}
  for all $\theta, \bx$ and $\by$.
\end{appendix_theorem}

\begin{proof}
  The Hessian of the multi-class logistic loss is given by
  Equation~\ref{eq:crossent_hessian}. We use the fact that the eigenvalues of a
  square matrix $\bA$ are contained in the union of the Gerschgorin discs
  constructed from the rows of $\bA$~\cite{horn2012matrix}.
  The Gerschgorin disc of the $i$-th row of $\bA$ has a center at $A_{ii}$ and a radius of $\sum_{j\ne i} |A_{ij}|$.
  Hence, an upper bound on the $i$-th Gerschgorin disc of $\text{diag}(\bp) - \bp \bp^\top$ is given by:
  \begin{equation}
    (p_i - p_i^2) + p_i \sum_{\substack{j=1 \\ j \ne i}}^C p_j
    = p_i (1 - p_i) + p_i (1-p_i)
    \le 0.25 + 0.25
    = 0.5,
  \end{equation}
  where we use the facts that $\sum_{\substack{j=1 \\ j \ne i}}^C p_j =
  1-p_i$ and $(1-p_i)p_i \le 0.25$ for $p_i \in [0, 1]$. Hence:
  \begin{equation}
    \lambda_{\max}(\nabla^2 \ell) \le 0.5.
  \end{equation}
\end{proof}

We rely on the following Corollary to bound the $L_2$ distance between two points on
the probability simplex, $\Delta^C$.
\begin{corollary}
  \label{cor:simplex_bound}
  For any two points $\bu$ and $\bv$ on the $(C\!-\!1)$-dimensional probability
  simplex their $L_2$ distance is no more than $\sqrt{2}$, \emph{i.e.}:
  \begin{equation}
    \max_{\bu, \bv \in \Delta^C} \|\bu -\bv\|_2 \le \sqrt{2}.
  \end{equation}
\end{corollary}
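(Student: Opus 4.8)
The plan is to give a direct elementary argument by bounding the squared $L_2$ distance coordinatewise. Write $\bu = (u_1, \dots, u_C)$ and $\bv = (v_1, \dots, v_C)$, so that by definition of $\Delta^C$ we have $u_i, v_i \ge 0$, $\sum_{i=1}^C u_i = 1$, and $\sum_{i=1}^C v_i = 1$. First I would expand
\begin{equation}
\|\bu - \bv\|_2^2 = \sum_{i=1}^C (u_i - v_i)^2 = \sum_{i=1}^C \left(u_i^2 - 2 u_i v_i + v_i^2\right).
\end{equation}

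Next, since $u_i v_i \ge 0$, each summand is at most $u_i^2 + v_i^2$; and since $u_i, v_i \in [0,1]$ we have $u_i^2 \le u_i$ and $v_i^2 \le v_i$, so $(u_i - v_i)^2 \le u_i + v_i$ for every $i$. Summing this termwise bound over $i$ and using that the entries of each point sum to one gives
\begin{equation}
\|\bu - \bv\|_2^2 \le \sum_{i=1}^C u_i + \sum_{i=1}^C v_i = 2,
\end{equation}
and taking square roots yields $\|\bu - \bv\|_2 \le \sqrt{2}$, which is the claim. For tightness (not strictly needed for the statement, but worth a remark), taking $\bu = \be_i$ and $\bv = \be_j$ with $i \ne j$ two distinct standard basis vectors achieves $\|\be_i - \be_j\|_2 = \sqrt{2}$, so the bound is attained.

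I expect no real obstacle here: the only thing to be careful about is invoking $u_i^2 \le u_i$, which requires $u_i \le 1$ — this holds because the nonnegative entries of a probability vector sum to $1$. An alternative route, if one prefers a more structural argument, is to note that $(\bu, \bv) \mapsto \|\bu - \bv\|_2$ is convex on the compact convex polytope $\Delta^C \times \Delta^C$, so by Bauer's maximum principle~\cite{kruvzik2000bauer} its maximum is attained at an extreme point, i.e.\ at a pair of vertices $(\be_i, \be_j)$, where the value is $0$ if $i=j$ and $\sqrt{2}$ otherwise; but the coordinatewise computation above is shorter and self-contained, so that is the version I would write.
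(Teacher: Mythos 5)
Your proof is correct, but it takes a different route from the paper. The paper proves the corollary via Bauer's maximum principle: $\|\bu - \bv\|_2$ is convex, $\Delta^C$ (and hence the product of simplices) is a compact convex set, so the maximum is attained at extreme points, which are pairs of standard basis vectors at distance $\sqrt{2}$ — this is exactly the ``structural'' alternative you sketch at the end and chose not to write out. Your main argument instead bounds the squared distance coordinatewise: $(u_i - v_i)^2 \le u_i^2 + v_i^2 \le u_i + v_i$ (using $u_i v_i \ge 0$ and $u_i, v_i \in [0,1]$), and summing gives $\|\bu - \bv\|_2^2 \le 2$. Both arguments are sound; yours is entirely self-contained and avoids citing an external maximum principle (and also avoids the mild subtlety, present in the paper's phrasing, of applying Bauer's principle to the pair $(\bu,\bv)$ on the product polytope rather than to a single simplex), while the paper's convexity argument generalizes more readily to maximizing other convex functions over polytopes and makes the identification of the maximizers immediate. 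Your tightness remark, exhibiting two distinct basis vectors achieving $\sqrt{2}$, matches the extreme-point computation in the paper, so nothing is lost relative to the original.
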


\begin{proof}
  Bauer's maximum principle states that a convex function on a convex set
  attains its maximum at an extreme point~\cite{kruvzik2000bauer}.  The $L_2$
  norm is a convex function, the simplex $\Delta^C$ is a convex set. The
  extreme points of the simplex are when $\bu$ and $\bv$ are vertices which for
  $\Delta^C$ implies they are standard basis vectors. The $L_2$ distance
  between two standard basis vectors is $\sqrt{2}$.
\end{proof}

\end{document}